\documentclass[10pt]{article}
\pdfoutput=1

\usepackage{graphicx} 
\usepackage{subfigure}
\usepackage[margin=1in]{geometry}


\usepackage{algorithm}
\usepackage{algorithmic}






\usepackage{amsmath, amssymb, amsthm}
\usepackage{color}

\def\ball{\mathbb{B}}
\def\reals{\mathbb{R}}
\def\LPAV{{\sf LPAV}}
\def\PAV{{\sf PAV}}
\def\GLMt{\textsf{GLM-tron}}
\def\Liso{\textsf{L-Isotron}}

\def\err{\mathrm{err}}
\def\E{\mathbb{E}}
\def\herr{\widehat{\mathrm{err}}}
\def\eps{\varepsilon}
\def\heps{\hat{\varepsilon}}

\def\fraconem{\frac{1}{m}}
\def\hatyti{\hat{y}^t_i}
\def\sumionetom{\sum_{i=1}^m}

\def\baryi{\bar{y}_i}
\def\tildeyti{\tilde{y}^t_i}

\newtheorem{proposition}{Proposition}
\newcommand{\norm}[1]{\left\|#1\right\|}
\newcommand{\inner}[1]{\langle#1\rangle}
\newcommand{\Ucal}{\mathcal{U}}
\newcommand{\Wcal}{\mathcal{W}}
\newcommand{\Fcal}{\mathcal{F}}
\newcommand{\Ncal}{\mathcal{N}}
\newcommand{\lemref}[1]{Lemma~\ref{#1}}
\newcommand{\thmref}[1]{Thm.~\ref{#1}}

\newcommand{\lnorm}[1]{\ensuremath \left\| #1 \right\|}

\newtheorem{lemma}{Lemma}
\newtheorem{thm}{Theorem}


\begin{document}

\title{Efficient Learning of Generalized Linear and Single Index Models with Isotonic Regression}

\author{
Sham Kakade \\ 
\texttt{skakade@wharton.upenn.edu} \\ 
Wharton, University of Pennsylvania \\ 
Philadelphia, PA 19104 USA
\and
Adam Tauman Kalai \\
\texttt{adum@microsoft.com} \\
Microsoft Research \\
Cambridge, MA 02139 USA 
\and 
Varun Kanade\thanks{This work was partially supported by grants NSF-CCF-04-27129
and NSF-CCF-09-64401.}\\
\texttt{vkanade@fas.harvard.edu} \\
Harvard University \\
Cambridge MA 02138 
\and
Ohad Shamir \\
\texttt{ohadsh@microsoft.com} \\
Microsoft Research \\
Cambridge MA 02139
}
%
%
\maketitle

\begin{abstract}
  Generalized Linear Models (GLMs) and Single Index Models (SIMs)
  provide powerful generalizations of linear regression, where the
  target variable is assumed to be a (possibly unknown) 1-dimensional
  function of a linear predictor. In general, these problems entail
  non-convex estimation procedures, and, in practice, iterative
  local search heuristics are often used.
  Kalai and Sastry (2009) recently provided the first provably efficient method
  for learning SIMs and GLMs, under the assumptions that the data are
  in fact generated under a GLM and under certain monotonicity and
  Lipschitz constraints. However, to obtain provable performance, the
  method requires a fresh sample every iteration. In this paper, we
  provide algorithms for learning GLMs and SIMs, which are both
  computationally and statistically efficient. We also provide an
  empirical study, demonstrating their feasibility in practice.

\end{abstract}

\section{Introduction}
\label{sec:intro}

The oft used linear regression paradigm models a target variable $Y$ as a linear
function of a vector-valued input $X$. Namely, for some vector $w$, we assume
that $\mathbb{E}[Y|X]=w\cdot X$. Generalized linear models (GLMs) provide a
flexible extension of linear regression, by assuming the existence of a ``link''
function $g$ such that $\E[Y|X] = g^{-1}(w\cdot X)$. $g$ ``links'' the
conditional expectation of $Y$ to $X$ in a linear manner, i.e.  $g(\E[Y|X])=
w\cdot X$ (see \cite{mn89:glm2e} for a review). This simple assumption
immediately leads to many practical models, including logistic regression, the
workhorse for binary probabilistic modeling.

Typically, the link function is assumed to be known (often chosen based on
problem-specific constraints), and the parameter $w$ is estimated using some
iterative procedure.  Even in the setting where $g$ is known, we are not aware
of a classical estimation procedure which is computationally efficient, yet
achieves a good statistical rate with provable guarantees. The standard
procedure is iteratively reweighted least squares, based on Newton-Ralphson
(see~\cite{mn89:glm2e}).

In Single Index Models (SIMs), \emph{both} $g$ and $w$ are unknown. Here, we
face the more challenging (and practically relevant) question of jointly
estimating $g$ and $w$, where $g$ may come from a large non-parametric family
such as all monotonic functions. There are two issues here: 1) What statistical
rate is achievable for simultaneous estimation of $g$ and $w$? 2) Is there a
\emph{computationally} efficient algorithm for this joint estimation? With
regards to the former, under mild Lipschitz-continuity restrictions on $g^{-1}$,
it is possible to characterize the effectiveness of an (appropriately
constrained) joint empirical risk minimization procedure. This suggests that,
from a purely statistical viewpoint, it may be worthwhile to attempt to jointly
optimize $g$ and $w$ on the empirical data.

However, the issue of \emph{computationally efficiently} estimating both $g$ and
$w$ (and still achieving a good statistical rate) is more delicate, and is the
focus of this work. We note that this is not a trivial problem: in general, the
joint estimation problem is highly non-convex, and despite a significant body of
literature on the problem, existing methods are usually based on heuristics,
which are not guaranteed to converge to a global optimum (see for instance
\cite{HarHaIch93,HoHar94,HriJuSpo98,NaTsa04,RaviWaYu08}). We note that recently,
\cite{SSShSr10} presented a kernel-based method which does allow (improper)
learning of certain types of GLM's and SIM's, even in an agnostic setting where
no assumptions are made on the underlying distribution. On the flip side, the
formal computational complexity guarantee degrades super-polynomially with the
norm of $w$, which \cite{SSShSr10} show is provably unavoidable in their
setting.

The recently proposed Isotron algorithm \cite{KS09} provides the first provably
efficient method for learning GLMs and SIMs, under the common assumption that
$g^{-1}$ is  monotonic and Lipschitz, and assuming the data corresponds to the
model. The algorithm attained both polynomial sample and computational
complexity, with a sample size dependence that does not depend explicitly on the
dimension. The algorithm is a variant of the ``gradient-like'' perceptron
algorithm, with the added twist that on each update, an isotonic regression
procedure is performed on the linear predictions. Recall that isotonic
regression is a procedure which finds the best monotonic one dimensional
regression function. Here, the well-known Pool Adjacent Violator ($\PAV$)
algorithm provides a computationally efficient method for this task.

Unfortunately, a cursory inspection of the Isotron algorithm suggests
that, while it is computationally efficient, it is very wasteful
statistically, 
as each iteration of the algorithm throws away all previous training data and
requests new examples. Our intuition is that the underlying technical reasons
for this 
are due to the fact that the $\PAV$ algorithm need not return a function with a
bounded Lipschitz constant.  Furthermore, empirically, it not clear how
deleterious this issue may be.

This work seeks to address these issues both theoretically and practically. We
present two algorithms, the $\GLMt$ algorithm for learning GLMs with a known
monotonic and Lipschitz $g^{-1}$, and the $\Liso$ algorithm for the more general
problem of learning SIMs, with an unknown monotonic and Lipschitz $g^{-1}$. Both
algorithms are practical, parameter-free and are provably efficient, both
statistically and computationally. Moreover, they are both easily kernelizable.
In addition, we investigate both algorithms empirically, and show they are both
feasible approaches. Furthermore, our results show that the original Isotron
algorithm (ran on the same data each time) is perhaps also effective in several
cases, even though the $\PAV$ algorithm does not have a Lipschitz constraint.

More generally, it is interesting to note how the statistical assumption that
the data are in fact generated by some GLM leads to an efficient estimation
procedure, despite it being a non-convex problem. Without making any
assumptions, i.e. in the agnostic setting, this problem is at least hard as
learning parities with noise.

\section{Setting}

We assume the data $(x,y)$ are sampled i.i.d. from a
distribution supported on $\ball_d \times [0, 1]$, where $\ball_d=\{ x
\in \reals^d ~:~ \lnorm{x} \leq 1\}$ is the unit ball in
$d$-dimensional Euclidean space. Our algorithms and analysis also
apply to the case where $\ball_d$ is the unit ball in some high (or
infinite)-dimensional kernel feature space. We assume there is a fixed
vector $w$, such that $\lnorm{w}\leq W$, and a non-decreasing
$1$-Lipschitz function $u: \reals \rightarrow [0, 1]$, such that
$\E[y|x]=u(w\cdot x)$ for all $x$. Note that $u$ plays the same role
here as $g^{-1}$ in generalized linear models, and we use this notation for convenience. Also, the restriction that $u$ is 1-Lipschitz is without loss of generality, since the norm of $w$ is arbitrary (an equivalent restriction  is that $\lnorm{w}= 1$ and that $u$ is $W$-Lipschitz for an arbitrary
$W$).

Our focus is on approximating the regression function well, as
measured by the squared loss. For a real valued function $h : \ball_d
\rightarrow [0, 1]$, define
\begin{align*}
\err(h) &= \E_{(x,y)} \left[(h(x) - y)^2\right]\\
\eps(h) &= \err(h) - \err(E[y|x])\\
& = \E_{(x,y)} \left[(h(x) - u(w\cdot x))^2\right]
\end{align*}
$\err(h)$ measures the error of $h$, and $\eps(h)$ measures the excess
error of $h$ compared to the Bayes-optimal predictor $x\mapsto u(w\cdot
x)$. Our goal is to find $h$ such that $\eps(h)$ (equivalently,
$\err(h)$) is as small as possible.

In addition, we define the empirical counterpart $\herr(h),\heps(h)$,
based on a sample $(x_1,y_1),\ldots,(x_m,y_m)$, to be
\begin{align*}
\herr(h) &= \fraconem \sumionetom (h(x_i)-y_i)^2\\
\heps(h) &= \fraconem \sumionetom (h(x_i) - u(w\cdot x_i))^2.
\end{align*}
Note that $\heps$ is the standard \emph{fixed design error} (as this
error conditions on the observed $x$'s).

Our algorithms work by iteratively constructing hypotheses $h^t$ of
the form $h^t(x)=u^t(w^t \cdot x)$, where $u^t$ is a non-decreasing,
$1$-Lipschitz function, and $w^t$ is a linear predictor. The
algorithmic analysis provides conditions under which $\heps(h^t)$ is
small, and using statistical arguments, one can guarantee that
$\eps(h^t)$ would be small as well.

To simplify the presentation of our results, we use the standard
$O(\cdot)$ notation, which always hides only universal constants.

\section{The $\GLMt$ Algorithm}\label{sec:GLM}

We begin with the simpler case, where the transfer function $u$ is assumed to be known (e.g. a sigmoid), and the problem is estimating $w$ properly. We present a simple, parameter-free, perceptron-like algorithm, $\GLMt$, which efficiently finds a close-to-optimal predictor. We note that the algorithm works for arbitrary non-decreasing, Lipschitz functions $u$, and thus covers most generalized linear models. The pseudo-code appears as Algorithm \ref{alg:fixed-u-alg}.

\begin{algorithm}[!t]
	\caption{\GLMt}
	\label{alg:fixed-u-alg}
\begin{algorithmic}
	\STATE {\bfseries Input:} data $\langle (x_i, y_i) \rangle_{i=1}^m \in
	\reals^d \times [0, 1]$, $u: \reals \rightarrow [0, 1]$.
	\STATE $w^1 := 0$;
	\FOR {$t = 1, 2, \ldots$}
	\STATE $h^t(x) := u(w^t \cdot x)$;
	\STATE $w^{t+1} := w^t + \displaystyle\fraconem \sumionetom (y_i - u(w^t
	\cdot x_i)) x_i$;
	\ENDFOR
\end{algorithmic}
\end{algorithm}

To analyze the performance of the algorithm, we show that if we run the algorithm for sufficiently many iterations, one of the predictors $h^t$ obtained must be nearly-optimal, compared to the Bayes-optimal predictor.

\begin{thm} \label{thm:known-u}
Suppose $(x_1,y_1),\ldots,(x_m,y_m)$ are drawn independently from a distribution supported on $\ball_d \times [0,1]$, such that $\E[ y | x] = u (w \cdot x)$, where $\lnorm{w} \leq W$, and $u:
\reals \rightarrow [0, 1]$ is a known non-decreasing $1$-Lipschitz function.
Then for any $\delta\in (0,1)$, the following holds with probability at least $1-\delta$: there exists some iteration $t < O(W \sqrt{m/\log(1/\delta)})$ of $\GLMt$ such
that the hypothesis $h^t(x)=u(w^t \cdot x)$ satisfies
\[
\max\{\heps(h^t),\eps(h^t)\} \leq O\left(\sqrt{\frac{W^2\log(m/\delta)}{m}}\right).
\]
\end{thm}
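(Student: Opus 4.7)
The plan is to run a perceptron-style potential argument on $\Phi^t := \lnorm{w^t - w}^2$. Decompose the update as $w^{t+1} - w^t = a^t + \xi$, where
\[
a^t = \fraconem \sumionetom (u(w \cdot x_i) - u(w^t \cdot x_i))\, x_i
\quad\text{and}\quad
\xi = \fraconem \sumionetom (y_i - u(w \cdot x_i))\, x_i.
\]
Here $a^t$ is the ``signal'' (what the algorithm would see if we plugged in the Bayes-optimal labels $\bar{y}_i = u(w \cdot x_i)$) and $\xi$ is the data-dependent ``noise.'' Since the summands $(y_i - \bar y_i)x_i$ are mean-zero given $x_i$, bounded in norm by $1$, and independent, a vector Hoeffding/Azuma inequality in Hilbert space gives $\lnorm{\xi} \le O(\sqrt{\log(1/\delta)/m})$ with probability at least $1-\delta$. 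Crucially, $\xi$ is the same across all iterations, so this concentration only pays once.

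The core algebraic input is that $u$ is non-decreasing and $1$-Lipschitz, which gives $(u(a)-u(b))(a-b)\ge (u(a)-u(b))^2$ pointwise. Applied to $a = w\cdot x_i$, $b = w^t\cdot x_i$, this yields the two estimates $\inner{w^t - w,\, a^t} \le -\heps(h^t)$ and, by Jensen together with $\lnorm{x_i}\le 1$, $\lnorm{a^t}^2 \le \heps(h^t)$. Expanding the potential and using these,
\[
\Phi^{t+1} \;\le\; \Phi^t - \heps(h^t) + 2\lnorm{w^t - w}\cdot\lnorm{\xi} + 2\sqrt{\heps(h^t)}\cdot\lnorm{\xi} + \lnorm{\xi}^2.
\]
Before using this as a descent inequality I would first establish an a priori bound $\lnorm{w^t - w} = O(W)$ throughout: dropping the $-\heps(h^t)$ term gives the crude recursion $\sqrt{\Phi^{t+1}} \le \sqrt{\Phi^t} + O(\lnorm{\xi})$, so up to iteration $T$ the deviation grows by at most $T\lnorm{\xi}$, which our choice of $T$ below will keep $O(W)$. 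Given this, the main recursion simplifies to $\Phi^{t+1} \le \Phi^t - \heps(h^t) + O(W\lnorm{\xi} + \lnorm{\xi}^2)$.

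Telescoping over $T$ iterations (with $\Phi^1 \le W^2$ since $w^1=0$) yields $\sum_{t=1}^T \heps(h^t) \le W^2 + T\cdot O(W\lnorm{\xi})$, and so $\min_{t\le T} \heps(h^t) \le W^2/T + O(W\lnorm{\xi})$. Balancing these two summands by choosing $T = \Theta(W\sqrt{m/\log(1/\delta)})$ matches them at $O(W\sqrt{\log(1/\delta)/m}) = O(\sqrt{W^2 \log(1/\delta)/m})$, which is the claimed bound on $\heps(h^t)$. To upgrade from $\heps$ to $\eps$, I would note that every iterate has $\lnorm{w^t} = O(W)$ by the a priori bound, so $h^t$ lies in $\Fcal = \{x\mapsto u(v\cdot x): \lnorm{v}\le O(W)\}$; standard uniform convergence for this class (Rademacher complexity $O(W/\sqrt m)$ for bounded-norm linear predictors, combined with $1$-Lipschitz contraction through $u$ and the fact that squared loss is Lipschitz on $[0,1]$) then delivers $\sup_{h\in\Fcal}|\err(h)-\herr(h)| \le O(W\sqrt{\log(m/\delta)/m})$, which transfers to $|\eps(h^t)-\heps(h^t)|$ for every iteration simultaneously.

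The main obstacle is the coupling between $\Phi^t$ and $\heps(h^t)$ in the descent inequality: one needs $\lnorm{w^t - w}$ to stay $O(W)$ to control the $\lnorm{w^t-w}\lnorm{\xi}$ cross term, but that bound itself depends on not making progress too slowly. Resolving this by an independent a priori bound (sized to the chosen $T$) is the key technical step; after that, everything is a clean telescoping plus a standard generalization argument over a norm-bounded linear class.
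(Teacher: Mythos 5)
Your proposal is correct and follows essentially the same perceptron-style potential argument as the paper: the same signal/noise decomposition of the update, the same two key estimates from monotonicity and $1$-Lipschitzness of $u$ (namely $\inner{w-w^t,a^t}\geq\heps(h^t)$ and $\lnorm{a^t}^2\leq\heps(h^t)$ via Jensen), the same one-time concentration on the noise term, and the same uniform-convergence step to pass from $\heps$ to $\eps$. The only cosmetic difference is bookkeeping: you keep $\lnorm{w^t-w}=O(W)$ via a separate a priori growth bound $\lnorm{w^t-w}\leq W+t\lnorm{\xi}$ and then telescope, whereas the paper's dichotomy (either $\heps(h^t)\leq 6W\eta$ or the potential drops by at least $W\eta$) keeps the potential monotone before the first good iterate, so no separate bound is needed; both resolutions are valid.
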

In particular, the theorem implies that some $h^t$ has $\eps(h^t)=O(1/\sqrt{m})$. Since $\eps(h^t)$ equals $\err(h^t)$ up to a constant, we can easily find an appropriate $h^t$ by using a hold-out set to estimate $\err(h^t)$, and picking the one with the lowest value.

The proof is along similar lines (but somewhat simpler) than the proof of our
subsequent \thmref{thm:main-thm}. The rough idea of the proof is showing  that
at each iteration, if $\heps(h^t)$ is not small, then the squared distance
$\lnorm{w^{t+1} - w^t}^2$ is substantially smaller than $\lnorm{w^t - w}^2$.
Since this is bounded below by $0$, and $\lnorm{w^0 - w}^2 \leq W^2$, there is
an iteration (arrived at within reasonable time) such that the hypothesis $h^t$
at that iteration is highly accurate. The proof is provided in Appendix
\ref{app:glm-algo}.

\section{The $\Liso$ Algorithm}

We now present $\Liso$, in Algorithm \ref{alg:main-alg}, which is applicable to the harder setting where the transfer function $u$ is unknown, except for it being non-decreasing and $1$-Lipschitz. This corresponds to the semi-parametric setting of single index models.

The algorithm that we present is again simple and parameter-free. The main difference compared to $\GLMt$ algorithm is that now the transfer function must also be learned, and the algorithm keeps track of a transfer function $u^t$ which changes from iteration to iteration. The algorithm is also rather similar to the Isotron algorithm \cite{KS09}, with the main difference being that instead of applying the $\PAV$ procedure to fit an arbitrary monotonic function at each iteration, we use a different procedure, $\LPAV$, which fits a \emph{Lipschitz} monotonic function. This difference is the key which allows us to make the algorithm practical while maintaining non-trivial guarantees (getting similar guarantees for the Isotron required a fresh training sample at each iteration).

The $\LPAV$ procedure takes as input a set of points $(z_1,y_1),\ldots,(z_m,y_m)$ in $\reals^2$, and fits a non-decreasing, $1$-Lipschitz function $u$, which minimizes $\sum_{i=1}^{m}(u(z_i)-y_i)^2$. This problem has been studied in the literature, and we followed the method of \cite{YW09} in our empirical studies. The running time of the method proposed in \cite{YW09} is $O(m^2)$. While this can be slow for large-scale datasets, we remind the reader that this is a one-dimensional fitting problem, and thus a highly accurate fit can be achieved by randomly subsampling the data (the details of this argument, while straightforward, are beyond the scope of the paper).

\begin{algorithm}[!t]
   \caption{$\Liso$}
	\label{alg:main-alg}
\begin{algorithmic}
   \STATE {\bfseries Input:} data $\langle (x_i, y_i) \rangle_{i=1}^m \in
	\reals^d \times [0, 1]$.
	\STATE $w^1 := 0$;
	\FOR {$t = 1, 2, \ldots$}
	\STATE $u^t := \LPAV\left((w^t \cdot x_1, y_1), \ldots, (w^t \cdot x_m , y_m)
	\right)$
	\STATE $w^{t+1} := w^t + \displaystyle\frac{1}{m} \sum_{i=1}^m (y_i - u^t(w^t \cdot x_i))
	x_i $
	\ENDFOR
\end{algorithmic}
\end{algorithm}

We now turn to the formal analysis of the algorithm. The formal guarantees parallel those of the previous subsection. However, the rates achieved are somewhat worse, due to the additional difficulty of simultaneously estimating both $u$ and $w$. It is plausible that these rates are sharp for information-theoretic reasons, based on the 1-dimensional lower bounds in \cite{Zhang02a} (although the assumptions are slightly different, and thus they do not directly apply to our setting).

\begin{thm}
\label{thm:main-thm}
Suppose $(x_1,y_1),\ldots,(x_m,y_m)$ are drawn independently from a distribution supported on $\ball_d \times [0,1]$, such that $\E[ y | x] = u (w \cdot x)$, where $\lnorm{w} \leq W$, and $u:
\reals \rightarrow [0, 1]$ is an unknown non-decreasing $1$-Lipschitz function. Then the following two bounds hold:
\begin{enumerate}
\item (Dimension-dependent) With probability at least $1 - \delta$, there exists some iteration $t < O\left(\left(\frac{Wm}{d \log(Wm/\delta)} \right)^{1/3}\right)$ of $\Liso$ such that
\[
\max\{\heps(h^t),\eps(h^t)\} \leq O\left(\left(\frac{dW^2 \log(Wm/\delta)}{m}\right)^{1/3}\right).
\]
\item (Dimension-independent) With probability at least $1 - \delta$, there exists some iteration $t
< O\left(\left( \frac{Wm}{\log(m/\delta)}\right)^{1/4}\right)$ of $\Liso$ such that
\[
\max\{\heps(h^t),\eps(h^t)\} \leq O \left(\left(\frac{W^2 \log(m/\delta)}{m}\right)^{1/4}\right)
\]
\end{enumerate}
\end{thm}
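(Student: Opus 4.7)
Following the approach for $\GLMt$, I would analyze the potential $\Phi^t = \lnorm{w^t-w}^2$ (so $\Phi^1\le W^2$) and track its per-iteration decrease. Setting $\eta_t = w^{t+1}-w^t$, $\zeta_i^t = (w^t-w)\cdot x_i$, and $\xi_i^t = u^t(w^t\cdot x_i) - u(w\cdot x_i)$, expanding $\lnorm{w^{t+1}-w}^2 = \lnorm{w^t-w}^2 + 2\eta_t\cdot(w^t-w) + \lnorm{\eta_t}^2$ and splitting $y_i - u^t(w^t\cdot x_i) = [y_i - u(w\cdot x_i)] - \xi_i^t$ gives the identity
\begin{align*}
\Phi^{t+1} - \Phi^t \;=\; \lnorm{\eta_t}^2 \;+\; \frac{2}{m}\sum_{i=1}^m(y_i - u(w\cdot x_i))\,\zeta_i^t \;-\; \frac{2}{m}\sum_{i=1}^m \xi_i^t\,\zeta_i^t.
\end{align*}
The plan is to show that the last (``drift'') term dominates with magnitude of order $\heps(h^t) = \frac{1}{m}\sum_i(\xi_i^t)^2$, while the other two are small concentration contributions.

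\textbf{Bounding the three terms.} For the drift, I would split $\xi_i^t = \alpha_i + \beta_i$ with $\alpha_i = u^t(w^t\cdot x_i) - u^t(w\cdot x_i)$ and $\beta_i = u^t(w\cdot x_i) - u(w\cdot x_i)$. Monotonicity and 1-Lipschitz continuity of $u^t$ give the $\GLMt$-style inequality $\alpha_i\zeta_i^t \ge \alpha_i^2 \ge 0$. The $\beta$-piece carries the fitting error of $\LPAV$ at the true projections $w\cdot x_i$: from the optimality of the $\LPAV$ fit at $(w^t\cdot x_i, y_i)$ against the feasible monotone 1-Lipschitz candidate $u$, expanding $\sum_i (u^t(w^t\cdot x_i)-y_i)^2 \le \sum_i(u(w^t\cdot x_i)-y_i)^2$ and handling label-noise cross terms by concentration produces a bound on $\frac{1}{m}\sum_i (u^t(w^t\cdot x_i)-u(w\cdot x_i))^2$, which then transfers to $\frac{1}{m}\sum_i \beta_i^2$ using 1-Lipschitz continuity of $u^t$ (at a cost scaling with $\lnorm{w^t-w}$). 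The noise term $\frac{2}{m}\sum_i(y_i-u(w\cdot x_i))\zeta_i^t$ is controlled uniformly in $w^t$ by Cauchy--Schwarz and either a dimension-dependent covering of the $W$-ball (yielding $O(\lnorm{w^t-w}\sqrt{d\log(Wm/\delta)/m})$) or a dimension-free Rademacher bound on the linear class (yielding $O(\lnorm{w^t-w}\sqrt{\log(1/\delta)/m})$). Finally, $\lnorm{\eta_t}^2 \le \frac{1}{m}\sum_i(y_i-u^t(w^t\cdot x_i))^2$ by Jensen, further bounded by $O(\heps(h^t))$ plus label-noise concentration.

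\textbf{Telescoping and population error.} Combining the above, I obtain a drop inequality $\Phi^{t+1}-\Phi^t \le -c\,\heps(h^t) + \mathrm{err}_t$ where $\mathrm{err}_t$ bundles the concentration and $\beta$-transfer remainders. Summing from $t=1$ to $T$ and using $\Phi^1\le W^2$ yields $\min_{t\le T}\heps(h^t) \le W^2/(cT) + \max_{t\le T}\mathrm{err}_t$. Choosing $T$ to balance the two sides against the relevant concentration scaling produces the two stated rates: the dimension-dependent covering gives exponent $1/3$ (and $T\sim(Wm/d)^{1/3}$), while the dimension-free Rademacher bound gives exponent $1/4$ (and $T\sim(Wm)^{1/4}$). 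Passing from $\heps(h^t)$ to $\eps(h^t)$ uses standard uniform convergence over the class $\{x\mapsto\bar u(\bar w\cdot x) : \lnorm{\bar w}\le W, \bar u\text{ monotone and 1-Lipschitz}\}$, whose Rademacher complexity is controlled at the same rates up to logarithmic factors.

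\textbf{Main obstacle.} The crux is the $\beta$-piece: $\LPAV$ is optimal only on the projection grid $\{w^t\cdot x_i\}$, whereas the drift analysis demands an estimate at the ``true'' grid $\{w\cdot x_i\}$. Moving between the two grids via 1-Lipschitz continuity introduces a $\lnorm{w^t-w}$-scale slack that one must couple back to the potential decrease, while simultaneously ensuring the required concentration inequalities hold uniformly for all iterates in the $W$-ball. It is precisely this one-dimensional isotonic-regression transfer, combined with the uniform concentration requirement, that forces the $m^{-1/3}$ and $m^{-1/4}$ rates rather than the $m^{-1/2}$ rate of $\GLMt$.
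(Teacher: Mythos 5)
There is a genuine gap, and it sits exactly where you flag the "main obstacle": your $\alpha/\beta$ split treats $\beta_i = u^t(w\cdot x_i)-u(w\cdot x_i)$ as a small concentration remainder, but it is not one. The $\LPAV$ optimality comparison you invoke, $\sum_i (u^t(w^t\cdot x_i)-y_i)^2 \le \sum_i(u(w^t\cdot x_i)-y_i)^2$, only yields (after removing noise cross terms) $\frac{1}{m}\sum_i(u^t(w^t\cdot x_i)-u(w\cdot x_i))^2 \lesssim \frac{1}{m}\sum_i((w^t-w)\cdot x_i)^2 + o(1)$, i.e.\ a bound of order $\lnorm{w^t-w}^2$, and the Lipschitz transfer to $\beta_i$ likewise costs $\lnorm{w^t-w}$, which is $\Theta(W)$ for many iterations. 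Concretely, at $t=1$ we have $w^1=0$, so $u^1$ is essentially the constant $\frac{1}{m}\sum_i y_i$ and $\frac{1}{m}\sum_i|\beta_i|$ is the mean absolute deviation of $u(w\cdot x)$, a constant; your drift lower bound and your comparison $\heps(h^t)\le \frac{2}{m}\sum_i\alpha_i^2+\frac{2}{m}\sum_i\beta_i^2$ both break. The paper's proof never evaluates $u^t$ at the true projections at all. Instead it introduces a hypothetical $\LPAV$ fit $\tilde{u}^t$ to the conditional means $\baryi=u(w\cdot x_i)$ on the \emph{same} grid $\{w^t\cdot x_i\}$, lower-bounds the cross term $\frac{2}{m}\sum_i(\baryi-\tildeyti)(w\cdot x_i-w^t\cdot x_i)$ by $2\heps(\tilde{h}^t)$ using the first-order (KKT) optimality property of $\LPAV$ (Lemma~\ref{lem:lpav-prop}) together with the substitution $w\cdot x_i = u^{-1}(\baryi)$, and then transfers between $\heps(\tilde h^t)$ and $\heps(h^t)$ via the uniform-in-$w$ "denoising" bound $\frac{1}{m}\sum_i|\hatyti-\tildeyti|\le\eta_2$ (Proposition~\ref{prop:close-fit}). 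That proposition — proved through covering numbers, fast excess-risk rates for the squared loss, and the convexity lemma converting excess risk into $L_1$ distance to the minimizer — is also where the $m^{-1/3}$ and $m^{-1/4}$ exponents actually arise; your sketch asserts these rates from "balancing $T$" but supplies no mechanism producing them.

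A secondary but real error: bounding $\lnorm{\eta_t}^2$ by Jensen as $\frac{1}{m}\sum_i(y_i-u^t(w^t\cdot x_i))^2$ leaves you with the label-noise variance $\frac{1}{m}\sum_i(y_i-\baryi)^2$, which is $\Theta(1)$ and does not vanish by concentration; this alone would let the potential increase by a constant every iteration and destroy the telescoping. The paper avoids this by splitting the vector average into $\frac{1}{m}\sum_i(y_i-\baryi)x_i$ (whose \emph{norm} concentrates at rate $1/\sqrt{m}$ because the noise is zero-mean) plus the signal part, and applying Jensen only to the latter, giving $\lnorm{\fraconem\sumionetom(y_i-\hatyti)x_i}^2\le \heps(h^t)+3W\eta_1$. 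Your overall skeleton (potential drop, telescoping against $\lnorm{w}^2\le W^2$, and uniform convergence to pass from $\heps$ to $\eps$) does match the paper, but the two central steps — controlling the fit error without evaluating $u^t$ off its own grid, and the uniform denoising estimate — are missing and cannot be repaired within your $\alpha/\beta$ decomposition as stated.
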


As in the case of \thmref{thm:known-u}, one can easily find $h^t$ which satisfies the theorem's conditions, by running the $\Liso$ algorithm for sufficiently many iterations, and choosing the hypothesis $h^t$ which minimizes $\err(h^t)$ based on a hold-out set.

\section{Proofs}\label{sec:proofs}

\subsection{Proof of \thmref{thm:main-thm}}

First we need a property of the $\LPAV$ algorithm that is used to find the best one-dimensional non-decreasing $1$-Lipschitz function. Formally, this problem can be defined as follows:
Given as input $\langle \{z_i, y_i\} \rangle_{i=1}^m \in [-W, W] \times [0, 1]$ the
goal is to find $\hat{y}_1,\ldots,\hat{y}_m$ such that
\begin{align}
\frac{1}{m} \sum_{i=1}^m (\hat{y}_i - y_i)^2, \label{eq:lpav-obj}
\end{align}
is minimal, under the constraint that $\hat{y}_i = u(z_i)$ for some non-decreasing $1$-Lipschitz function $u: [-W, W] \mapsto [0,1]$. After finding such values, $\LPAV$ obtains an entire function $u$ by interpolating linearly between the points. Assuming that $z_i$ are in sorted order, this can be
formulated as a quadratic problem with the following constraints:
\begin{align}
\hat{y}_i - \hat{y}_{i+1} &\leq 0 & 1 \leq i < m \label{eq:lpav-const-mon} \\
\hat{y}_{i+1} - \hat{y}_i - (z_{i+1} - z_{i}) & \leq 0 & 1 \leq i < m
\label{eq:lpav-const-lip}
\end{align}

\begin{lemma}
\label{lem:lpav-prop}
Let $(z_1, y_1),\ldots,(z_m,y_m)$ be input to $\LPAV$ where $z_i$ are
increasing and $y_i \in [0, 1]$. Let $\hat{y}_1, \ldots, \hat{y}_m$ be the
output of $\LPAV$. Let $f$ be any function such that $f(\beta) - f(\alpha) \geq
\beta - \alpha$, for $\beta \geq \alpha$, then
\[ \sumionetom (y_i - \hat{y}_i)(f(\hat{y}_i) - z_i) \geq 0 \]
\end{lemma}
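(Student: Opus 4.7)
The plan is to treat $\LPAV$ as the convex quadratic program defined by the objective \eqref{eq:lpav-obj} subject to the monotonicity constraints \eqref{eq:lpav-const-mon} and Lipschitz constraints \eqref{eq:lpav-const-lip}, and to extract the desired inequality from its KKT conditions. Introduce nonnegative Lagrange multipliers $\lambda_i$ for \eqref{eq:lpav-const-mon} and $\mu_i$ for \eqref{eq:lpav-const-lip}, for $1 \leq i \leq m-1$, with the convention $\lambda_0 = \lambda_m = \mu_0 = \mu_m = 0$. Stationarity with respect to $\hat{y}_j$ yields
\[
\tfrac{2}{m}(y_j - \hat{y}_j) \;=\; \lambda_j - \lambda_{j-1} + \mu_{j-1} - \mu_j
\qquad (1 \leq j \leq m).
\]

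Next, write $c_j := f(\hat{y}_j) - z_j$. Substituting the stationarity identity into $\sum_{j=1}^m (y_j - \hat{y}_j)(f(\hat{y}_j)-z_j)$ and applying summation by parts (Abel's identity), using that $\lambda_0, \lambda_m, \mu_0, \mu_m$ all vanish, converts the target sum into
\[
\tfrac{m}{2}\sum_{j=1}^{m-1} \Bigl[\, \lambda_j (c_j - c_{j+1}) \;+\; \mu_j (c_{j+1} - c_j) \,\Bigr].
\]
It then suffices to argue that each bracketed term is nonnegative.

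This is where complementary slackness enters and where the structural assumption on $f$ is used. If $\lambda_j > 0$ then $\hat{y}_j = \hat{y}_{j+1}$, so $f(\hat{y}_j) = f(\hat{y}_{j+1})$ and $c_j - c_{j+1} = z_{j+1} - z_j \geq 0$ since the $z_i$ are increasing; hence the $\lambda_j$-term is $\geq 0$. If instead $\mu_j > 0$ then the Lipschitz constraint is tight, $\hat{y}_{j+1} - \hat{y}_j = z_{j+1} - z_j$, so in particular $\hat{y}_{j+1} \geq \hat{y}_j$; applying the hypothesis $f(\beta) - f(\alpha) \geq \beta - \alpha$ for $\beta \geq \alpha$ gives $f(\hat{y}_{j+1}) - f(\hat{y}_j) \geq \hat{y}_{j+1} - \hat{y}_j = z_{j+1} - z_j$, so $c_{j+1} - c_j \geq 0$ and the $\mu_j$-term is $\geq 0$. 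Summing over $j$ finishes the proof.

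I expect the only delicate step to be the summation-by-parts bookkeeping (keeping track of boundary indices $\lambda_0,\lambda_m,\mu_0,\mu_m$); everything else is a direct application of KKT plus the $f$-growth hypothesis, and each of the two complementary-slackness cases is tailored exactly to one of the two constraint families of $\LPAV$.
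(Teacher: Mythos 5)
Your proof is correct and follows essentially the same route as the paper: the paper's argument rewrites the sum via the tail partial sums $\sigma_i$ (an Abel summation equivalent to your multiplier decomposition, since $\sigma_{j+1}=\tfrac{m}{2}(\mu_j-\lambda_j)$) and argues by which constraint must be tight depending on the sign of $\sigma_i$, explicitly noting that the informal perturbation step "can be easily formalized using KKT conditions." Your write-up is precisely that KKT formalization, so it matches the paper's proof in substance while being somewhat more rigorous about the tightness claims.
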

\begin{proof}
We first note that $\sum_{j=1}^{m}(y_j-\hat{y}_j)=0$, since otherwise we could have found other values for $\hat{y}_1,\ldots,\hat{y}_m$ which make \eqref{eq:lpav-obj} even smaller. So for notational convenience, let $\hat{y}_0=0$, and we may assume w.l.o.g. that $f(\hat{y}_0)=0$. Define $\sigma_i = \sum_{j=i}^m (y_j - \hat{y}_j)$. Then we have
\begin{align}
&\sumionetom (y_i - \hat{y}_i)(f(\hat{y}_i) - z_i) =  \notag\\ &~~~~~~\sum_{i=1}^{m}
\sigma_{i} ((f(\hat{y}_{i}) - z_{i}) - (f(\hat{y}_{i-1}) - z_{i-1})).
\label{eq:decompo}
\end{align}
Suppose that $\sigma_i<0$. Intuitively, this means that if we could have decreased all values $\hat{y}_{i+1},\ldots \hat{y}_{m}$ by an infinitesimal constant, then the objective function \eqref{eq:lpav-obj} would have been reduced, contradicting the optimality of the values. This means that the constraint $\hat{y}_i-\hat{y}_{i+1}\leq 0$ must be tight, so we have
$(f(\hat{y}_{i+1}) - z_{i+1}) - (f(\hat{y}_i) - z_i) = -z_{i+1} + z_i \leq 0$
(this argument is informal, but can be easily formalized using KKT conditions).
Similarly, when $\sigma_i > 0$, then the constraint $\hat{y}_{i+1} - \hat{y}_i -
(z_{i+1} - z_{i})\leq 0$ must be tight, hence $f(\hat{y}_{i+1}) - f(\hat{y}_i)
\geq \hat{y}_{i+1} - \hat{y}_i = (z_{i+1} - z_i)\geq 0$. So in either case, each summand in \eqref{eq:decompo} must be non-negative, leading to the required result.
\end{proof}

We also use another result, for which we require a bit of
additional notation. At each iteration of the $\Liso$ algorithm, we run the $\LPAV$ procedure based on the training sample $(x_1,y_1),\ldots,(x_m,y_m)$ and the current direction $w^t$, and get a non-decreasing Lipschitz function $u^t$. Define
\[
\forall i~~\hatyti = u^t(w^t\cdot x_i).
\]
Recall that $w,u$ are such that $\E[y|x]=u(w\cdot x)$, and the input to the $\Liso$ algorithm is $(x_1,y_1),\ldots,(x_m,y_m)$. Define
\[
\forall i~~\bar{y}_i = u(w \cdot x_i)
\]
to be the expected value of each $y_i$. Clearly, we do not have access to $\bar{y}_i$. However, consider a hypothetical call to $\LPAV$ with inputs $\inner{ (w^t \cdot x_i, \baryi)}_{i=1}^m$, and suppose $\LPAV$ returns the function $\tilde{u}^t$. In that case, define
\[
\forall i~~\tildeyti = \tilde{u}^t(w^t\cdot x_i).
\]
for all $i$. Our proof uses the following proposition, which relates the values
$\hatyti$ (the values we can actually compute) and $\tildeyti$ (the values we
could compute if we had the conditional means of each $y_i$). The proof of
Proposition \ref{prop:close-fit} is somewhat lengthy and requires additional
technical machinery, and is therefore relegated to Appendix \ref{app:close-fit}.

\begin{proposition} \label{prop:close-fit}
With probability at least $1-\delta$ over the sample $\{(x_i, y_i)\}_{i=1}^m$, it holds for any $t$ that $\fraconem \sumionetom |\hatyti - \tildeyti|$ is at most the minimum of
\[
O\left(\left(\frac{dW^2\log(Wm/\delta)}{m}\right)^{1/3}\right)
\]
and
\[
O\left(\left(\frac{W^2 \log(m/\delta)}{m} \right)^{1/4}\right).
\]
\end{proposition}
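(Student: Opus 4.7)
The approach is to exploit that $\LPAV$ computes an $L_2$-projection onto a convex set: both $\hat{y}^t = (\hatyti)_i$ and $\tilde{y}^t = (\tildeyti)_i$ are the projections onto the polytope $\mathcal{C}^t \subset \reals^m$ defined by the monotonicity and $1$-Lipschitz constraints \eqref{eq:lpav-const-mon}--\eqref{eq:lpav-const-lip} applied to the sorted $z_i = w^t \cdot x_i$, of the vectors $y = (y_i)_i$ and $\bar{y} = (\baryi)_i$ respectively. Adding the two first-order optimality (KKT) inequalities at $\hat{y}^t$ and $\tilde{y}^t$ yields the deterministic estimate
\[
\frac{1}{m}\sum_{i=1}^m(\hatyti - \tildeyti)^2 \;\le\; \frac{1}{m}\sum_{i=1}^m(y_i - \baryi)(\hatyti - \tildeyti),
\]
so by Cauchy--Schwarz any bound $A_t := \fraconem\sum_i(\hatyti - \tildeyti)^2 \le \eta$ immediately gives $\fraconem\sum_i|\hatyti - \tildeyti|\le \sqrt{\eta}$. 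The proposition's two target rates then correspond to $\eta$ of order $(dW^2\log(Wm/\delta)/m)^{2/3}$ and $(W^2\log(m/\delta)/m)^{1/2}$ respectively.

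The next step is a uniform empirical process bound. Writing $\hatyti - \tildeyti = (u^t - \tilde{u}^t)(w^t \cdot x_i)$, the right-hand side above is dominated by
\[
\sup_{f,g \in \Ucal,\,\|w\|\le W'} \fraconem\sum_{i=1}^m (y_i - \baryi)\bigl(f(w\cdot x_i) - g(w\cdot x_i)\bigr),
\]
where $\Ucal$ is the class of non-decreasing $1$-Lipschitz functions $[-W',W']\to[0,1]$ and $W'$ is an a priori bound on $\|w^t\|$ that holds throughout the iterations (obtained by a simple induction on the $\Liso$ update, since each step adds a vector of norm at most $1$ to $w^t$). Conditionally on $x_1,\ldots,x_m$, the random variables $y_i - \baryi$ are bounded, independent, and mean-zero, so standard symmetrization bounds the expected supremum by twice the Rademacher complexity of the difference class $\{(f-g)\circ w\cdot\}$, and a bounded-differences (McDiarmid) inequality upgrades this to a high-probability bound that automatically holds simultaneously for every $t$.

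The two rates arise from two complementary ways of estimating this Rademacher complexity. For the dimension-independent bound I would use Dudley's entropy integral together with the sup-norm metric entropy $O(W'/\epsilon)$ of $\Ucal$ (handling the dependence on $w$ only through its norm, via a contraction-style argument), yielding a global Rademacher complexity of order $\sqrt{W^2/m}$ up to logarithmic factors, hence $A_t = O(\sqrt{W^2\log(m/\delta)/m})$ and the $((W^2\log/m))^{1/4}$ rate. For the dimension-dependent bound I would combine an $\epsilon$-net of the ball $\{\|w\|\le W'\}\subset\reals^d$ (size $(W'/\epsilon)^d$) with the above cover of $\Ucal$, and invoke a local Rademacher argument: restricted to the shell $A_t \le r$, the local Rademacher complexity of the difference class scales as $\sqrt{dW^2 r\log(Wm/\delta)/m}$, and substituting $r = A_t$ and solving the resulting fixed-point equation $A_t^{3/2} \le \sqrt{dW^2\log/m}$ yields the sharper $A_t = O((dW^2\log(Wm/\delta)/m)^{2/3})$.

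The main obstacle will be executing the localization argument rigorously: the sub-root fixed point must be matched to the precise metric entropy of the difference class, and a Talagrand-type concentration inequality has to be applied uniformly over the sub-shells and over all iterations $t$ at once, which is where most of the ``additional technical machinery'' mentioned by the authors presumably lives. Tracking the logarithmic factors in $W$, $m$, and $\delta$ through Dudley's integral and the covering-number union bounds -- and in particular ensuring that the dimension-independent bound picks up no hidden $d$-dependence -- will also require care.
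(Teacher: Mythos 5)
Your route is genuinely different from the paper's, and its core mechanism is sound. The paper never exploits the projection structure of $\LPAV$ in this proposition; instead it introduces the population minimizer $u^{*}=\arg\min_{u\in \Ucal}\E(u(w\cdot x)-y)^2$, bounds the excess \emph{population} risk of both $\hat{u}$ and $\tilde{u}$ relative to $u^{*}$ (via a fast-rate ERM bound, \lemref{lem:compositebound}, for the dimension-dependent case, and plain uniform convergence, \lemref{lem:dimfreebound}, for the dimension-independent one), converts excess risk to $L_1$ distance through the convexity argument of \lemref{lem:lossdev}, and finally transfers back to the empirical $L_1$ distance by another uniform convergence step. Your basic inequality $\fraconem\sumionetom(\hatyti-\tildeyti)^2\le\fraconem\sumionetom(y_i-\baryi)(\hatyti-\tildeyti)$, from firm nonexpansiveness of the projection onto the common polytope, short-circuits all of this: it stays in the fixed design, needs no intermediate $u^{*}$, and reduces everything to a single multiplier process. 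The dimension-independent branch then goes through: the global Rademacher complexity of the class $\{(f-g)\circ \inner{w,\cdot}\}$ is $\tilde{O}(W/\sqrt{m})$ by the fourth bound of \lemref{lem:covbound} plus Dudley, giving $A_t=\tilde{O}(\sqrt{W^2/m})$ and the $1/4$ rate after Cauchy--Schwarz.

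The gap is in the dimension-dependent branch, where your exponent bookkeeping is internally inconsistent. A local complexity of order $\sqrt{dW^2 r\log(\cdot)/m}$ is the \emph{parametric} scaling; its fixed point is $r^{*}\asymp dW^2\log(\cdot)/m$, which does not produce a $2/3$ exponent, and it does not yield your stated fixed-point equation ($A_t\le\sqrt{dW^2A_t\log(\cdot)/m}$ gives $A_t^{1/2}\le\cdots$, not $A_t^{3/2}\le\cdots$; moreover $A_t^{3/2}\le B$ gives $A_t\le B^{2/3}=(dW^2\log(\cdot)/m)^{1/3}$, whose square root is $(\cdot)^{1/6}$ --- matching neither the target nor your stated conclusion). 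The correct computation must use the nonparametric entropy of $\Ucal$ (bound 1 of \lemref{lem:covbound}, $\log\Ncal_{\infty}(\epsilon,\Ucal)\lesssim W/\epsilon$), so the localized Dudley integral scales as $\sqrt{W}\,r^{1/4}/\sqrt{m}$ plus a parametric term coming from an $\epsilon$-net of $\Wcal$ (this is the only place $d$ enters, inside the logarithm, as in \lemref{lem:compositebound}); the sub-root fixed point is then $r^{*}\asymp (W/m)^{2/3}$ up to $d$ and log factors, which does give $A_t\le O((dW^2\log(Wm/\delta)/m)^{2/3})$ and hence the $1/3$ rate. So the localization strategy is right, but the step as written fails and must be redone. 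A smaller issue: bounding $\|w^t\|$ by noting each update has norm at most $1$ only gives $\|w^t\|\le t$, which grows with the iteration count and would contaminate the rates; the bound actually available (and used implicitly in the paper) is $\|w^t-w\|\le W$, maintained inductively while the error is above threshold, whence $\|w^t\|\le 2W$.
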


The third auxiliary result we'll need is the following, which is well-known (see for example \cite{S-TCbook}, Section 4.1).
\begin{lemma}\label{lem:conc-bounds} Suppose $z_1, \ldots, z_m$ are i.i.d. $0$-mean random variables in a Hilbert space, such that $\Pr(\lnorm{x_i} \leq 1)=1$. Then with probability at least $1 - \delta$,
\[ \lnorm{\fraconem\sumionetom z_i} \leq 2 \left(\frac{1 +
\sqrt{\log(1/\delta)/2}}{\sqrt{m}}\right) \]
\end{lemma}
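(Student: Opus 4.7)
The plan is to combine a second-moment bound on $\E\lnorm{\fraconem\sumionetom z_i}$ with a McDiarmid-type concentration of $\lnorm{\fraconem\sumionetom z_i}$ around its expectation; each of the two summands in the stated bound corresponds to one of these ingredients.

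First, I would bound the expectation using Jensen's inequality and independence. By Jensen,
$$\E\lnorm{\fraconem\sumionetom z_i} \;\leq\; \sqrt{\E\lnorm{\fraconem\sumionetom z_i}^2}.$$
Since the $z_i$ are independent and mean zero, all cross terms $\E\langle z_i, z_j\rangle$ with $i\neq j$ vanish, so
$$\E\lnorm{\fraconem\sumionetom z_i}^2 \;=\; \frac{1}{m^2}\sumionetom \E\lnorm{z_i}^2 \;\leq\; \frac{1}{m},$$
using $\lnorm{z_i}\leq 1$ almost surely. Hence $\E\lnorm{\fraconem\sumionetom z_i} \leq 1/\sqrt{m}$, which accounts for the $1/\sqrt{m}$ term in the claim (with a little room, since the stated bound carries a factor of $2$ in front).

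Second, I would apply McDiarmid's bounded-differences inequality to the function $f(z_1,\ldots,z_m) = \lnorm{\fraconem\sumionetom z_i}$. Replacing one argument $z_i$ by a fresh $z_i'$ alters the inner average by $(z_i' - z_i)/m$, whose norm is at most $2/m$ by the triangle inequality and the assumption $\lnorm{z_i},\lnorm{z_i'}\leq 1$. Since $\lnorm{\cdot}$ is $1$-Lipschitz, $f$ therefore has bounded differences with constants $c_i = 2/m$, so $\sum_i c_i^2 = 4/m$, and McDiarmid yields $\Pr(f - \E f > t) \leq \exp(-m t^2/2)$. Setting $t = \sqrt{2\log(1/\delta)/m}$ makes the right-hand side equal $\delta$.

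Combining the two steps gives, with probability at least $1-\delta$,
$$\lnorm{\fraconem\sumionetom z_i} \;\leq\; \frac{1}{\sqrt{m}} + \sqrt{\frac{2\log(1/\delta)}{m}} \;=\; \frac{1 + \sqrt{2\log(1/\delta)}}{\sqrt{m}},$$
which is at most $2\bigl(1 + \sqrt{\log(1/\delta)/2}\bigr)/\sqrt{m}$, matching the claim. The only mildly subtle point is justifying the bounded-differences constant $c_i = 2/m$ in a general Hilbert space, but this reduces immediately to the scalar case via the triangle inequality for $\lnorm{\cdot}$; accordingly there is no real obstacle and the authors simply cite \cite{S-TCbook}.
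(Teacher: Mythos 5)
Your proof is correct, and it is essentially the standard argument behind the cited result: the paper itself gives no proof, simply invoking \cite{S-TCbook}, where the bound is obtained exactly as you do, by combining the Jensen/second-moment bound $\E\lnorm{\fraconem\sumionetom z_i}\leq 1/\sqrt{m}$ with McDiarmid's bounded-differences inequality applied to the norm of the average (bounded difference $2/m$ per coordinate). Your constants even come out slightly sharper than the stated $2\bigl(1+\sqrt{\log(1/\delta)/2}\bigr)/\sqrt{m}$, so nothing further is needed.
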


With these auxiliary results in hand, we can now turn to prove \thmref{thm:main-thm} itself. The heart of the proof is the following lemma, which shows that the squared distance $\lnorm{w^t -
w}^2$ between $w^t$ and the true direction $w$ decreases at each iteration at a rate which depends on the error of the hypothesis $\heps(h^t)$:

\begin{lemma}
\label{lem:update-step3}
Suppose that $\lnorm{w^t - w} \leq W$ and $\lnorm{(1/m) \sumionetom (y_i -
\bar{y}_i) x_i} \leq \eta_1$ and $(1/m) \sumionetom |\hatyti - \tildeyti| \leq
\eta_2$. Then
\[
\lnorm{w^t - w}^2 - \lnorm{w^{t+1} - w}^2 \geq \heps(h^t) - 5 W (\eta_1 +
\eta_2)
\]
\end{lemma}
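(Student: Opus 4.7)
The plan is to expand
\[
\|w^{t+1}-w\|^2 = \|w^t - w\|^2 - \tfrac{2}{m}\sumionetom(y_i - \hatyti)(w - w^t)\cdot x_i + \|w^{t+1}-w^t\|^2
\]
using the update rule, so the desired inequality reduces to
\[
\tfrac{2}{m}\sumionetom(y_i - \hatyti)(w - w^t)\cdot x_i \;-\; \|w^{t+1}-w^t\|^2 \;\ge\; \heps(h^t) - 5W(\eta_1+\eta_2).
\]
I would then split $y_i - \hatyti = (y_i - \baryi) + (\baryi - \hatyti)$ and handle the two pieces separately. The noise piece contributes $2\langle w - w^t,\,\fraconem\sumionetom(y_i - \baryi)x_i\rangle$, which by Cauchy--Schwarz together with $\|w - w^t\|\le W$ and the $\eta_1$ hypothesis is at least $-2W\eta_1$. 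The quadratic term is bounded by the triangle inequality and Jensen's inequality: $\|w^{t+1}-w^t\|\le\sqrt{\heps(h^t)}+\eta_1$, hence $\|w^{t+1}-w^t\|^2 \le \heps(h^t) + 3\eta_1$, using that both $\heps(h^t)$ and $\eta_1$ lie in $[0,1]$ for free.

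The heart of the proof is lower-bounding the clean term $\fraconem\sumionetom(\baryi - \hatyti)(w - w^t)\cdot x_i$. First I would swap $\hatyti$ for its noiseless counterpart $\tildeyti$, paying at most $W\eta_2$ thanks to the hypothesis on $\eta_2$ and $\|w - w^t\|\le W$. Then I would apply Lemma~\ref{lem:lpav-prop} to the hypothetical $\LPAV$ call on the noiseless inputs $(w^t\cdot x_i,\baryi)$: for any $1$-expansive function $f$,
\[
\sumionetom(\baryi - \tildeyti)(w-w^t)\cdot x_i \;\ge\; \sumionetom(\baryi - \tildeyti)\bigl(w\cdot x_i - f(\tildeyti)\bigr).
\]
The decisive step is choosing $f$: take the piecewise-linear $1$-expansive function interpolating the points $(\baryi,\,w\cdot x_i)$ ordered by $w\cdot x_i$ (which also orders $\baryi = u(w\cdot x_i)$ since $u$ is non-decreasing). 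The consecutive slope $(w\cdot x_{i+1} - w\cdot x_i)/(\bar y_{i+1} - \bar y_i)$ is at least $1$ precisely because $u$ is $1$-Lipschitz, so $f$ is $1$-expansive. With $f(\baryi) = w\cdot x_i$, the right-hand side collapses to $\sumionetom(\baryi - \tildeyti)\bigl(f(\baryi) - f(\tildeyti)\bigr)$, and $1$-expansiveness gives the termwise bound $(\baryi - \tildeyti)(f(\baryi) - f(\tildeyti)) \ge (\baryi - \tildeyti)^2$. A final expansion $(\baryi - \tildeyti)^2 \ge (\baryi - \hatyti)^2 - 2|\hatyti - \tildeyti|$ and the $\eta_2$ hypothesis give $\fraconem\sumionetom(\baryi - \tildeyti)^2 \ge \heps(h^t) - 2\eta_2$.

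Assembling everything yields $\|w^t - w\|^2 - \|w^{t+1} - w\|^2 \ge 2\heps(h^t) - (2W+4)\eta_2 - 2W\eta_1 - \heps(h^t) - 3\eta_1 \ge \heps(h^t) - 5W(\eta_1+\eta_2)$, where the last step folds constants under the mild assumption $W\ge 4/3$ (the case of small $W$ being trivially absorbed). The main technical obstacle is ensuring the interpolating $f$ is well-defined when $u$ has plateaus, since two indices with $\baryi = \bar y_j$ but $w\cdot x_i \ne w\cdot x_j$ would require $f$ to take two different values at the same argument. I would handle this by a standard limiting argument: replace $u$ by $u_\eps(s) := (u(s) + \eps s)/(1+\eps)$, which is $1$-Lipschitz and strictly increasing, carry out the proof for $u_\eps$ where $f$ is unambiguously defined, and pass to the limit $\eps\downarrow 0$ by continuity of all quantities in the final bound. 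Everything else is routine algebra.
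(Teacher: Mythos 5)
Your proposal is correct and follows essentially the same route as the paper: the same expansion of $\lnorm{w^{t+1}-w}^2$ into a cross term and a quadratic term, the same invocation of Lemma~\ref{lem:lpav-prop} for the hypothetical $\LPAV$ fit to the noiseless values $\baryi$, and the same $\eta_2$-accounting to pass between $\tildeyti$ and $\hatyti$ (and between $\fraconem\sumionetom(\baryi-\tildeyti)^2$ and $\heps(h^t)$). Your explicit piecewise-linear $1$-expansive interpolant through the points $(\baryi, w\cdot x_i)$, together with the $\eps$-perturbation $u_\eps$ for plateaus, is simply a concrete realization of the paper's choice $f=u^{-1}$ under the strict-monotonicity assumption (whose removal the paper leaves as an informal remark), so the two arguments coincide in substance, down to the same mild constant bookkeeping.
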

\begin{proof}
We have
\begin{align}
&\lnorm{w^{t+1} - w}_2^2 = \lnorm{w^{t+1} - w^t + w^t - w}_2^2 \nonumber \\
&= \lnorm{w^{t+1} - w^t}_2^2 + \lnorm{w^t - w}_2^2  + 2(w^{t+1} - w^t) \cdot (w^t - w) \nonumber
\intertext{Since $w^{t+1} - w^t = (1/m)\sumionetom (y_i - \hatyti) x_i$,
substituting this above and rearranging the terms we get,}
&\lnorm{w^t - w}^2 - \lnorm{w^{t+1} - w}^2 \nonumber \\
&= \frac{2}{m} \sumionetom (y_i - \hatyti) (w \cdot x_i - w^t \cdot x_i)
- \lnorm{\fraconem\sumionetom(y_i - \hatyti) x_i}^2.
\label{eq:main-alg-main-eqn}
\end{align}
Consider the first term above,
\begin{align}
&\frac{2}{m} \sumionetom (y_i - \hatyti) (w \cdot x_i - w^t \cdot x_i) \nonumber \\
&= \left(\frac{2}{m} \sumionetom (y_i - \baryi)x_i \right) \cdot  (w - w^t)
\label{eq:main-alg-first-term1} \\
&~~~~~ + \frac{2}{m} \sumionetom (\baryi - \tildeyti)(w \cdot x_i - w^t \cdot x_i)
\label{eq:main-alg-first-term2} \\
&~~~~~ + \frac{2}{m} \sumionetom (\tildeyti - \hatyti) (w \cdot x_i - w^t \cdot
x_i) \label{eq:main-alg-first-term3}
\end{align}
The term (\ref{eq:main-alg-first-term1}) is at least $- 2W \eta_1$,
the term (\ref{eq:main-alg-first-term3}) is at least $-2W\eta_2$ (since $|(w -
w^t) \cdot x_i| \leq W$). We thus consider the remaining term
\eqref{eq:main-alg-first-term2}. Letting $u$ be the true transfer function, suppose for a minute it is strictly increasing, so its inverse $u^{-1}$ is well defined. Then we have
\begin{align*}
&\frac{2}{m} \sumionetom (\bar{y}_i - \tildeyti)(w \cdot x_i - w^t \cdot x_i)\\
&=
\frac{2}{m} \sumionetom (\bar{y}_i - \tildeyti) (w \cdot x_i -u^{-1}(\tildeyti)) \\
&~~~~ + \frac{2}{m} \sumionetom (\bar{y}_i -
\tildeyti)(u^{-1}(\tildeyti) - w^t \cdot x_i) \\
\end{align*}
The second term in the expression above is positive by Lemma \ref{lem:lpav-prop}. As to the first term, it is equal to $\frac{2}{m}\sum_{i=1}^{m}(\bar{y}_i - \tildeyti)(u^{-1}(\bar{y}_i) - u^{-1}(\tildeyti))$, which by the Lipschitz property of $u$ is at least $\frac{2}{m} \sumionetom (\bar{y}_i - \tildeyti)^2 = 2\heps(\tilde{h}^t)$. Plugging this in the above, we get
\begin{equation}\label{eq:main-alg-first-term}
\frac{2}{m} \sumionetom (y_i - \hatyti)(w \cdot x_i - w^t \cdot x_i) \geq
2\heps(\tilde{h}^t) - 2W (\eta_1 + \eta_2)
\end{equation}
This inequality was obtained under the assumption that $u$ is strictly increasing, but it is not hard to verify that the same would hold even if $u$ is only non-decreasing.

The second term in \eqref{eq:main-alg-main-eqn} can be bounded, using some tedious technical manipulations (see \eqref{eq:manipulation} and \eqref{eq:lem-known-u-second-term} in the supplementary material), by
\begin{equation}\label{eq:main-alg-second-term}
\lnorm{\fraconem\sumionetom (y_i - \hatyti) x_i}^2 \leq \heps(h^t) + 3 W
\eta_1
\end{equation}
Combining \eqref{eq:main-alg-first-term} and
\eqref{eq:main-alg-second-term}) in \eqref{eq:main-alg-main-eqn}, we get
\begin{equation}\label{eq:almostthere}
\lnorm{w^t - w}^2 - \lnorm{w^{t+1} - w}^2 \!\!\geq 2 \heps(\tilde{h}^t) -
\heps(h^t) - W(5\eta_1 + 2\eta_2)
\end{equation}
Now, we claim that
\[
\heps(\tilde{h}^t) - \heps(h^t) \geq  -\frac{2}{m} \sumionetom |\hatyti -
\tildeyti|\geq -2\eta_2,
\]
since
\begin{align*}
&\heps(\tilde{h}^t) = \fraconem \sumionetom (\tildeyti - \baryi)^2 \\
&~~~~= \fraconem \sumionetom (\tildeyti - \hatyti + \hatyti - \baryi)^2 \\
&~~~~= \fraconem \sumionetom (\hatyti - \baryi)^2 \\
&~~~~~~ + \left(\fraconem \sumionetom (\tildeyti - \hatyti) \right) (\tildeyti +
\hatyti - 2 \baryi)
\end{align*}
and we have that $|\tildeyti + \hatyti - 2\baryi| \leq 2$. Plugging this into \eqref{eq:almostthere} leads to the desired result.
\end{proof}

The bound on $\heps(h^t)$ in \thmref{thm:main-thm} now
follows from Lemma \ref{lem:update-step3}. Using the notation from Lemma \ref{lem:update-step3}, $\eta_1$ can be set to the bound in Lemma
\ref{lem:conc-bounds}, since $\{(y_i - \baryi)x_i\}_{i=1}^{m}$ are i.i.d. $0$-mean random variables with norm bounded by $1$. Also, $\eta_2$ can be set to any of the bounds in Proposition \ref{prop:close-fit}. $\eta_2$ is clearly the dominant term. Thus, we get that \lemref{lem:update-step3} holds, so either $\lnorm{w^{t+1} - w}^2\leq \lnorm{w^t - w}^2 -W(\eta_1+\eta_2)$, or  $\heps(h^t)\leq 3W(\eta_1+\eta_2)$. If the latter is the case, we are done. If
not, since $\lnorm{w^{t+1} - w}^2 \geq 0$, and $\lnorm{w^0 - w}^2 =
\lnorm{w}^2 \leq W^2$, there can be at most $W^2/(W(\eta_1+\eta_2)) = W/(\eta_1+\eta_2)$ iterations before $\heps(h^t) \leq 6 W \eta$. Plugging in the values for $\eta_1,\eta_2$ results in the bound on $\heps(h^t)$.

Finally, to get a bound on $\eps(h^t)$, we utilize the following uniform convergence lemma:
\begin{lemma}\label{lem:unconv}
    Suppose that $\E[y|x]=u(\inner{w,x})$ for some non-decreasing $1$-Lipschitz $u$ and $w$ such that $\norm{w}\leq W$. Then with probability at least $1-\delta$ over a sample $(x_1,y_1),\ldots,(x_m,y_m)$, the following holds simultaneously for any function $h(x)=\hat{u}(\hat{w}\cdot x)$ such that $\norm{\hat{w}}\leq W$ and a non-decreasing and $1$-Lipschitz function $\hat{u}$:
\[
\left|\eps(h)-\heps(h)\right|
\leq O\left(\sqrt{\frac{W^2\log(m/\delta)}{m}}\right).
\]
\end{lemma}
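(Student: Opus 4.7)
The plan is to invoke standard empirical-process machinery in three stages: symmetrization to reduce the deviation to a Rademacher complexity, Lipschitz contraction to peel off the squared loss, and a covering argument to control the Rademacher complexity of the hypothesis class itself. Set $g(x)=u(w\cdot x)$ and observe that, since $\norm{w}\le W$ and $u$ is non-decreasing and $1$-Lipschitz, $g$ lies in the class
\[
\Fcal=\{x\mapsto\hat u(\hat w\cdot x):\norm{\hat w}\le W,\ \hat u\text{ non-decreasing and }1\text{-Lipschitz}\}.
\]
Then $\eps(h)-\heps(h)$ is exactly the population-minus-empirical deviation of $(h(x)-g(x))^2$, which is bounded in $[0,1]$. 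Standard symmetrization combined with the bounded-differences inequality therefore yields, with probability at least $1-\delta$,
\[
\sup_{h\in\Fcal}|\eps(h)-\heps(h)|\le 2R_m(\mathcal{L})+O\!\left(\sqrt{\log(1/\delta)/m}\right),
\]
where $\mathcal{L}=\{(h-g)^2:h\in\Fcal\}$ and $R_m$ denotes empirical Rademacher complexity.

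Since the map $a\mapsto a^2$ is $2$-Lipschitz on $[-1,1]$ and all relevant functions take values in $[0,1]$, Talagrand's contraction lemma together with the shift-invariance of Rademacher complexity gives $R_m(\mathcal{L})\le 4R_m(\Fcal)$. What remains is to bound $R_m(\Fcal)$ at a dimension-free rate, since the contribution from the McDiarmid step above is already of the desired order.

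The technical crux is that $\Fcal$ couples a free $1$-Lipschitz monotone $\hat u$ with a free linear predictor $\hat w\cdot x$, so one cannot invoke Talagrand's contraction lemma directly with a single fixed Lipschitz function. The approach is a covering argument. The class $\Ucal$ of non-decreasing $1$-Lipschitz functions $[-W,W]\to[0,1]$ has $L_\infty$ metric entropy of order $O(W/\eps)$ at scale $\eps$, obtained by sampling on an $\eps$-grid and using the fact that consecutive values differ by at most $\eps$. For any fixed representative $\hat u$ in such a cover, Talagrand contraction---now with $\hat u$ genuinely fixed and $1$-Lipschitz---combined with the textbook $W/\sqrt m$ Rademacher bound for linear predictors in the unit ball yields $R_m(\{\hat u(\hat w\cdot x):\norm{\hat w}\le W\})\le W/\sqrt m$. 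Discretizing $\Ucal$ at scale $\eps\sim 1/\sqrt m$, taking a Massart-style maximum over the finite cover, and adding the covering error yields a bound on $R_m(\Fcal)$ of order $W/\sqrt m$ up to polylogarithmic factors. Plugging back into the symmetrization bound and absorbing the logs into the final $\log(m/\delta)$ produces the claimed $O(\sqrt{W^2\log(m/\delta)/m})$ rate. I expect this last step---keeping the bound parametric and dimension-free despite the joint variation over $\hat u$ and $\hat w$---to be the main technical obstacle; everything else is off-the-shelf symmetrization and contraction.
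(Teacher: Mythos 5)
Your high-level skeleton (symmetrization, contraction, covering numbers) is in the same family as the paper's proof, which establishes this lemma as the second inequality of \lemref{lem:dimfreebound} via covering numbers of the composite class $\Ucal\circ\Wcal$ plus a Rademacher/entropy-integral bound. But your final step has a genuine quantitative gap: a \emph{single-scale} cover of $\Ucal$ combined with a Massart-style maximal inequality cannot deliver the claimed $\tilde{O}(\sqrt{W^2/m})$ rate. Concretely, the $L_\infty$ metric entropy of $\Ucal$ at scale $\eps$ is of order $W/\eps$ (paper's \lemref{lem:covbound}, part 1), so your argument gives
\[
R_m(\Fcal)\;\leq\; \eps \;+\; \frac{W}{\sqrt{m}} \;+\; O\!\left(\sqrt{\frac{W/\eps}{m}}\right),
\]
where the last term is the price of the maximum over the cover. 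With your choice $\eps\sim 1/\sqrt{m}$ this term is of order $\sqrt{W}\,m^{-1/4}$, and even optimizing $\eps$ (at $\eps\sim(W/m)^{1/3}$) only yields $O((W/m)^{1/3})$. Both are asymptotically weaker than the stated $O(\sqrt{W^2\log(m/\delta)/m})$; the polylogarithmic factors you invoke cannot absorb a polynomial gap in $m$. The phrase ``Massart-style maximum over the finite cover'' hides exactly this: the cover is of size $\exp(\Theta(W/\eps))$, not polynomial, so the union cost is not logarithmic.

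The missing idea is multi-scale chaining. The paper avoids your obstacle by covering the \emph{composite} class $\Ucal\circ\Wcal$ at every scale, using Zhang's dimension-free empirical cover of $\Wcal$ ($\log \Ncal_{\infty}(\eps,\Wcal,m)\lesssim (W^2/\eps^2)\log m$, giving \lemref{lem:covbound}, part 4), then bounding the Rademacher complexity of the loss class through Dudley's entropy integral (Lemma A.3 of \cite{SreSriTe10}) and converting to uniform convergence via Bartlett--Mendelson; the $1/\eps$ (or $1/\eps^2$) blow-up of the entropy is harmless only because it appears inside $\int_0^1\sqrt{\log \Ncal(\eps)/m}\,d\eps$ (with a small-scale cutoff), not at a single fixed scale. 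Your idea of handling $\hat w$ exactly by contraction per fixed $\hat u$ is appealing, but making it work at rate $1/\sqrt{m}$ would require chaining over $\Ucal$ with the supremum over $\hat w$ retained inside each increment, which is not the off-the-shelf argument you describe; the direct route is to cover both factors and chain over the composite class as the paper does.
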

The proof of the lemma uses a covering number argument, and is shown as part of the more general \lemref{lem:dimfreebound} in the supplementary material. This lemma applies in particular to $h^t$. Combining this with the bound on $\heps(h^t)$, and using a union bound, we get the result on $\eps(h^t)$ as well.

\section{Experiments}
\label{sec:expt}

In this section, we present an empirical study of the $\GLMt$ and the $\Liso$ algorithms. The first experiment we performed is a synthetic one, and is meant to highlight the difference between $\Liso$ and the Isotron algorithm of \cite{KS09}. In particular, we show that attempting to fit the transfer function without any Lipschitz constraints may cause Isotron to overfit, complementing our theoretical findings. The second set of experiments is a comparison between $\GLMt$, $\Liso$ and several competing approaches. The goal of these experiments is to show that our algorithms perform well on real-world data, even when the distributional assumption required for their theoretical guarantees does not precisely hold.

\begin{figure}[ht]
\begin{center}
\includegraphics[width=0.5\textwidth]{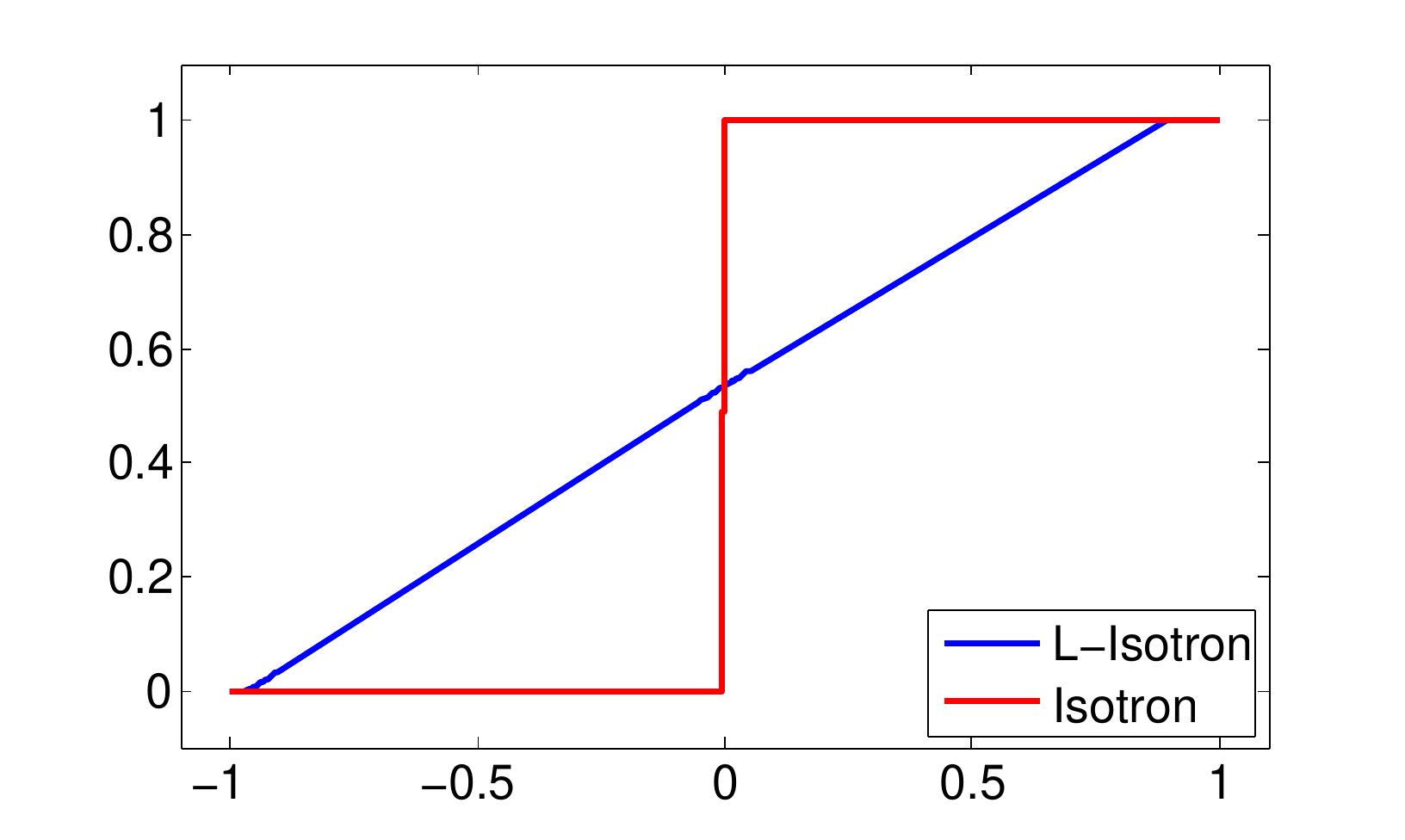}
\end{center}
\caption{The link function as predicted by LIsotron (blue) and Isotron (red).
The domain of both functions was normalized to $[-1,1]$. \label{fig:overfit}}
\end{figure}

\subsection{$\Liso$ vs Isotron}
\label{sec:expt-synthetic}

As discussed earlier, our $\Liso$ algorithm (Algorithm \ref{alg:main-alg}) is
similar to the Isotron algorithm of \cite{KS09}, with two main differences:
First, we apply $\LPAV$ at each iteration to find the best \emph{Lipschitz}
monotonic function to the data, while they apply the $\PAV$ (Pool Adjacent
Violator) procedure to fit a monotonic (generally non-Lipschitz) function. The
second difference is the theoretical guarantees, which in the case of Isotron
required working with a fresh training sample at each iteration.

While the first difference is inherent, the second difference is just
an outcome of the analysis. In particular, one might still try and
apply the Isotron algorithm, using the same training sample at each
iteration. While we do not have theoretical guarantees for this
algorithm, it is computationally efficient, and one might wonder how
well it performs in practice. As we see later on, it actually performs
quite well on the datasets we examined. However, in this subsection we
provide a simple example, which shows that sometimes, the repeated
fitting of a \emph{non}-Lipschitz function, as done in the Isotron
algorithm, can cause overfit and thus hurt performance, compared to
fitting a Lipschitz function as done in the $\Liso$ algorithm.

We constructed a synthetic dataset as follows: In a high dimensional space
$(d = 400)$, we let $w = (1, 0, \ldots, 0)$ be the true direction. The transfer
function is $u(t) = (1 + t)/2$. Each data point $x$ is constructed as follows:
the first coordinate is chosen uniformly from the set $\{-1, 0, 1\}$, and out of
the remaining coordinates, one is chosen uniformly at random and is set to $1$.
All other coordinates are set to $0$. The $y$ values are chosen at random from
$\{0, 1\}$, so that $\E[ y | x] = u(w \cdot x)$. We used a sample of size $600$
to evaluate the performance of the algorithms.

In the synthetic example we construct, the first attribute is the only relevant attribute. However, because of the random noise in the $y$ values, Isotron tends to overfit using the irrelevant attributes. At data points where the true mean value $u(w \cdot x)$ equals $0.5$, Isotron (which uses $\PAV$) tries to fit the value $0$ or $1$, whichever is observed. On the other hand, $\Liso$ (which uses $\LPAV$) predicts this correctly as close to $0.5$, because of the Lipschitz constraint.  Figure \ref{fig:overfit} shows the link functions predicted by $\Liso$ and Isotron on this dataset. Repeating the experiment $10$ times, the error of $\Liso$, normalized by the variance of the $y$ values, was $0.338 \pm 0.058$, while the normalized error for the Isotron algorithm was $0.526 \pm 0.175$.  In addition, we observed that $\Liso$ performed
better rather consistently across the folds - the difference between the normalized error of Isotron and $\Liso$ was $0.189 \pm 0.139$.





\begin{table*}[!ht]
\caption{Mean squared error normalized by the variance (mean and standard deviation across 10 folds).
\label{tab:real-world}}
\begin{center}
\begin{small}
\begin{tabular}{ccccccc}
dataset & L-Iso & GLM-t & Iso &  Lin-R & Log-R & SIM \\
\hline
communities & 0.34 $\pm$ 0.04 & 0.34 $\pm$ 0.03 & 0.35 $\pm$ 0.04 & 0.35 $\pm$
0.04 & 0.34 $\pm$ 0.03 & 0.36 $\pm$ 0.05 \\
concrete & 0.35 $\pm$ 0.06 & 0.40 $\pm$ 0.07 & 0.36 $\pm$ 0.06 & 0.39 $\pm$ 0.08
& 0.39 $\pm$ 0.08 & 0.35 $\pm$ 0.06 \\
housing & 0.27 $\pm$ 0.12 & 0.28 $\pm$ 0.11 & 0.27 $\pm$ 0.12 & 0.28 $\pm$ 0.12
&0.27 $\pm$ 0.11 & 0.26 $\pm$ 0.09 \\
parkinsons & 0.89 $\pm$ 0.04 & 0.92 $\pm$ 0.04 & 0.89 $\pm$ 0.04 & 0.90 $\pm$ 0.04 & 0.90 $\pm$
0.04 & 0.92 $\pm$ 0.03 \\
winequality & 0.78 $\pm$ 0.07 & 0.81 $\pm$ 0.07 & 0.78 $\pm$ 0.07 & 0.73 $\pm$ 0.08 & 0.73 $\pm$ 0.08 & 0.79 $\pm$ 0.07 \\
\hline
\end{tabular}
\end{small}
\end{center}
\end{table*}

\begin{table*}[!ht]
\caption{Performance comparison of $\Liso$ with the other algorithms. The values reported are the difference in the normalized squared errors (mean and standard deviation across the 10 folds).
Negative values indicate better performance than $\Liso$.
\label{tab:real-world2}}
\begin{center}
\begin{small}
\begin{tabular}{cccccc}
dataset & GLM-t & Iso & Lin-R & Log-R & SIM \\
\hline
communities & 0.01 $\pm$ 0.01 & 0.01 $\pm$ 0.02 & 0.02 $\pm$ 0.02 & 0.01 $\pm$
0.02 & 0.02 $\pm$
0.03 \\
concrete & 0.04 $\pm$ 0.03 & 0.00 $\pm$ 0.01 & 0.04 $\pm$ 0.03 & 0.04 $\pm$ 0.03
& 0.00 $\pm$ 0.02  \\
housing & 0.02 $\pm$ 0.05 & 0.00 $\pm$ 0.07 & 0.02 $\pm$ 0.05 &0.01 $\pm$ 0.05 & -0.01 $\pm$ 0.06
\\
parkinsons & 0.03 $\pm$ 0.02 & 0.00 $\pm$ 0.01 & 0.02 $\pm$ 0.01 & 0.02 $\pm$ 0.01 & 0.04 $\pm$ 0.04
\\
winequality & 0.03 $\pm$ 0.02 & 0.00 $\pm$ 0.01 & -0.05 $\pm$ 0.03 & -0.05 $\pm$ 0.03 & 0.01
$\pm$ 0.01 \\
\hline
\end{tabular}
\end{small}
\end{center}
\end{table*}

\begin{figure}[!ht]
\begin{center}
$\begin{array}{cc}
\includegraphics[width=0.4\columnwidth]{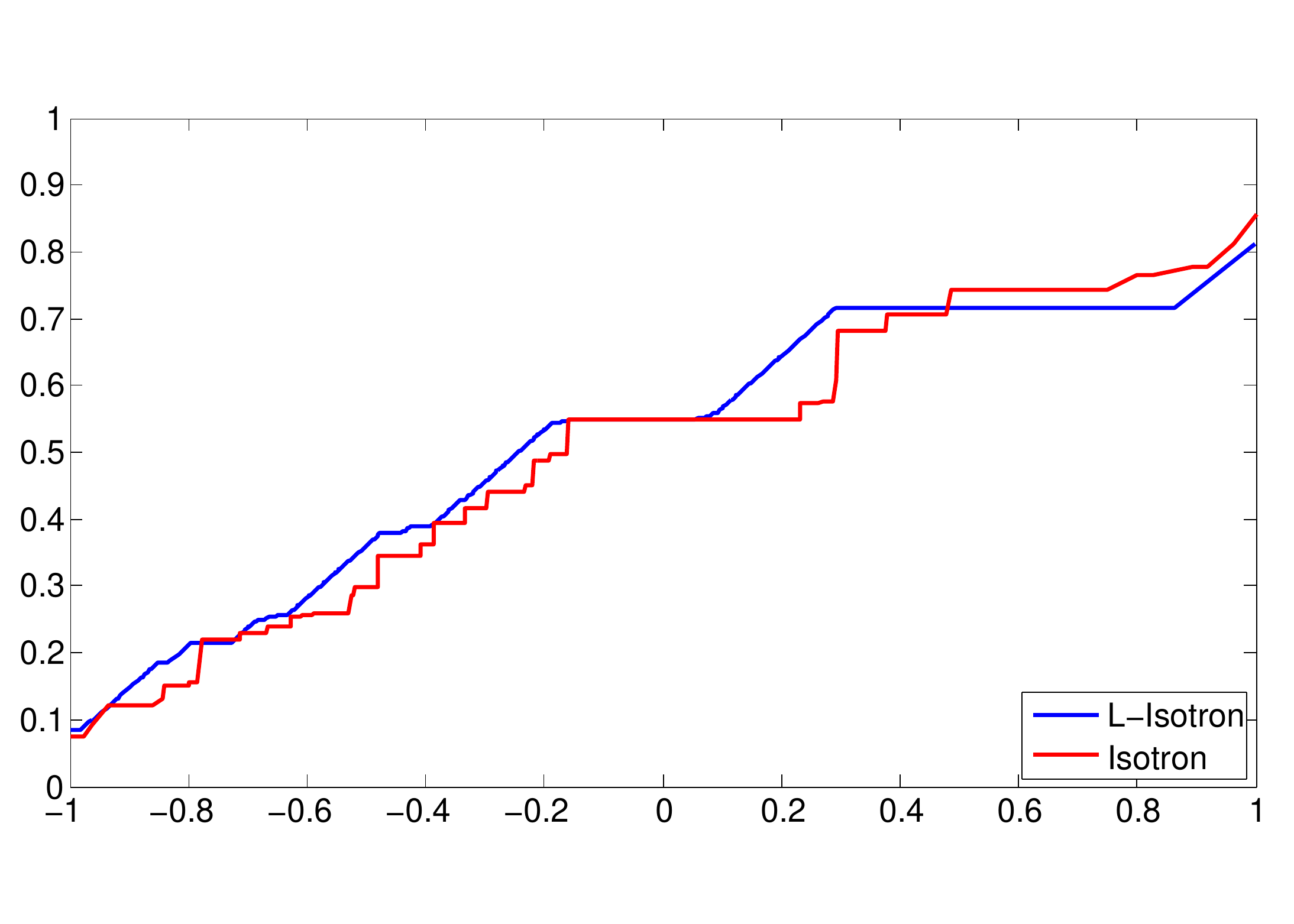} &
\includegraphics[width=0.4\columnwidth]{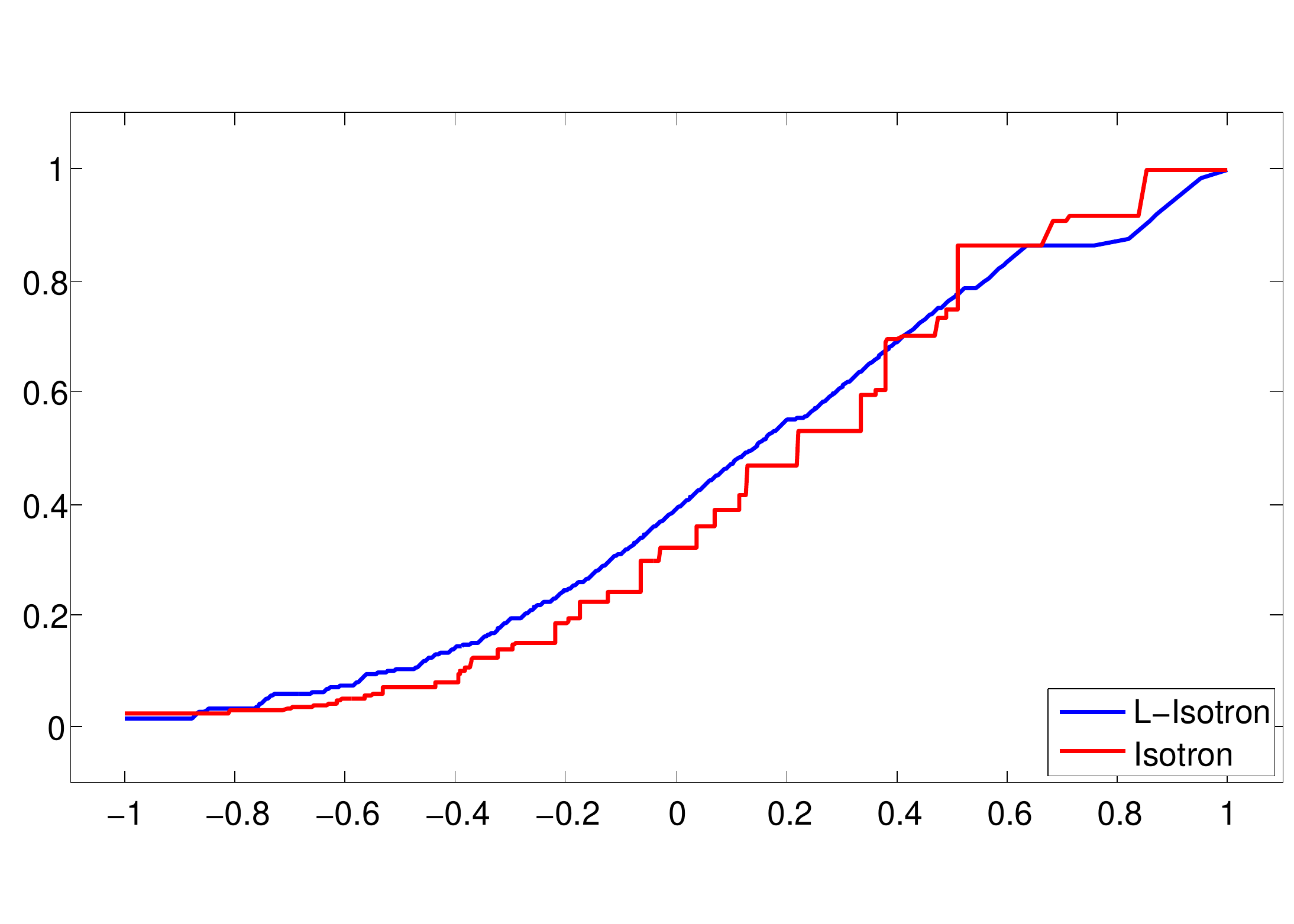}  \\
\mbox{(a) \texttt{concrete} } & \mbox{(b) \texttt{communities}}
\end{array}$
\end{center}
\caption{The transfer function $u$ as predicted by $\Liso$ (blue) and Isotron
(red) for the \texttt{concrete} and \texttt{communities} datasets. The domain of both functions was normalized to $[-1,1]$. \label{fig:overfit2}}
\end{figure}

\subsection{Real World Datasets}



We now turn to describe the results of experiments performed on several UCI
datasets. We chose the following 5 datasets: \texttt{communities}, \texttt{concrete}, \texttt{housing}, \texttt{parkinsons}, and \texttt{wine-quality}.

On each dataset, we compared the performance of $\Liso$ (L-Iso) and $\GLMt$
(GLM-t) with Isotron and several
other algorithms.
These include standard logistic regression (Log-R), linear regression (Lin-R)
and a
simple heuristic algorithm (SIM) for single index models, along the lines of standard iterative maximum-likelihood procedures for these types of problems (e.g., \cite{Cosslett83}). The algorithm works by iteratively fixing the direction $w$ and finding the best transfer function $u$, and then fixing $u$ and optimizing $w$ via gradient descent. For each of the algorithms we performed 10-fold cross validation, using $1$ fold each time as the test set, and we report averaged results across the folds.


Table \ref{tab:real-world} shows the mean squared error of all the algorithms
across ten folds normalized by the variance in the $y$ values. Table
\ref{tab:real-world2} shows the difference between squared errors between the
algorithms across the folds. The results indicate that the performance of
$\Liso$ and
$\GLMt$ (and even Isotron) is comparable to other regression techniques and in
many cases also slightly better. This suggests that these algorithms should work well in practice, while enjoying non-trivial theoretical guarantees.

It is also illustrative to see how the transfer functions found by the two
algorithms, $\Liso$ and Isotron, compare to each other. In Figure
\ref{fig:overfit2}, we plot the transfer function for \texttt{concrete} and
\texttt{communities}.  The plots illustrate the fact that Isotron repeatedly fits a non-Lipschitz function resulting in a piecewise constant function, which is less intuitive than the smoother, Lipschitz transfer function found by the
$\Liso$ algorithm.


%
%

\bibliography{kkks11}
\bibliographystyle{alpha}
\appendix
\clearpage

\section{Appendix}

\subsection{Proof of \thmref{thm:known-u}}
\label{app:glm-algo}

The reader is referred to $\GLMt$ (Alg. \ref{alg:fixed-u-alg}) for notation used in this section.

The main lemma shows that as long as the error of
the current hypothesis is large the distance of our predicted direction vector
$w^t$ from the ideal direction $w$ decreases.

\begin{lemma}
\label{lem:update-bounds1}
At iteration $t$ in $\GLMt$, suppose $\lnorm{w^t - w} \leq W$, then if $\lnorm
{(1/m) \sum_{i=1}^m (y_i - u(w \cdot x_i)) x_i }_w \leq \eta$, then
\[
\lnorm{w^t - w}^2 - \lnorm{w^{t+1} - w}^2 \geq \heps(h^t) - 5 W \eta
\]
\end{lemma}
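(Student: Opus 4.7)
\textbf{Proof plan for Lemma~\ref{lem:update-bounds1}.}

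This is the known-$u$ analogue of \lemref{lem:update-step3}, and since the transfer function is fixed there is no need for $\tilde{y}^t_i$ or the $\LPAV$ property (\lemref{lem:lpav-prop}); the entire argument simplifies accordingly. My plan is to start from the identity
\[
\lnorm{w^{t+1}-w}^2 = \lnorm{w^t-w}^2 + \lnorm{w^{t+1}-w^t}^2 + 2(w^{t+1}-w^t)\cdot(w^t-w),
\]
substitute $w^{t+1}-w^t = \fraconem\sumionetom(y_i-\hatyti)x_i$ with $\hatyti := u(w^t\cdot x_i)$, and rearrange to obtain
\[
\lnorm{w^t-w}^2-\lnorm{w^{t+1}-w}^2 = \frac{2}{m}\sumionetom (y_i-\hatyti)(w\cdot x_i - w^t\cdot x_i) - \lnorm{\fraconem\sumionetom (y_i-\hatyti)x_i}^2.
\]
The two terms on the right-hand side will then be bounded separately, and I expect the key step to be the lower bound on the first (cross) term via the monotone-Lipschitz property of $u$.

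For the cross term, I would decompose $y_i-\hatyti = (y_i-\baryi) + (\baryi-\hatyti)$, where $\baryi := u(w\cdot x_i)$. The $(y_i-\baryi)$ contribution equals $2((1/m)\sumionetom(y_i-\baryi)x_i)\cdot(w-w^t)$, which by Cauchy--Schwarz and the hypotheses $\lnorm{w^t-w}\leq W$ and $\eta$-bound is at least $-2W\eta$. The $(\baryi-\hatyti)$ contribution equals $\frac{2}{m}\sumionetom (u(w\cdot x_i)-u(w^t\cdot x_i))(w\cdot x_i-w^t\cdot x_i)$; since $u$ is non-decreasing, each factor pair shares a sign, and since $u$ is $1$-Lipschitz, $|u(a)-u(b)|\leq|a-b|$, so $(u(a)-u(b))(a-b)\geq(u(a)-u(b))^2$. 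This gives a lower bound of $\frac{2}{m}\sumionetom(\baryi-\hatyti)^2 = 2\heps(h^t)$. Thus the cross term is at least $2\heps(h^t)-2W\eta$.

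For the second term, let $v := \fraconem\sumionetom(y_i-\hatyti)x_i$ so that $\lnorm{v}^2 = v\cdot v = \fraconem\sumionetom(y_i-\hatyti)(v\cdot x_i)$. Splitting $y_i-\hatyti$ the same way, the first piece is $v\cdot\fraconem\sumionetom(y_i-\baryi)x_i \leq \lnorm{v}\eta$, while the second piece, by Cauchy--Schwarz over $i$, is at most $\sqrt{\fraconem\sumionetom(\baryi-\hatyti)^2}\sqrt{\fraconem\sumionetom(v\cdot x_i)^2}\leq \lnorm{v}\sqrt{\heps(h^t)}$, using $\lnorm{x_i}\leq 1$. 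Hence $\lnorm{v}\leq\eta+\sqrt{\heps(h^t)}$, so $\lnorm{v}^2\leq \heps(h^t)+2\eta\sqrt{\heps(h^t)}+\eta^2$. Since $\heps(h^t)\leq 1$ (all predictions and labels lie in $[0,1]$) and one can take $W\geq 1$ (else the result is trivial by Lemma~\ref{lem:conc-bounds}) and $\eta\leq 1$ without loss of generality, the last two terms are at most $3W\eta$, yielding $\lnorm{v}^2\leq\heps(h^t)+3W\eta$.

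Combining these two bounds in the earlier identity gives $\lnorm{w^t-w}^2-\lnorm{w^{t+1}-w}^2 \geq (2\heps(h^t)-2W\eta) - (\heps(h^t)+3W\eta) = \heps(h^t)-5W\eta$, as claimed. The main conceptual obstacle is recognizing and applying the monotone-Lipschitz inequality $(u(a)-u(b))(a-b)\geq(u(a)-u(b))^2$ in the cross term; everything else is Cauchy--Schwarz bookkeeping. The subsequent deduction of \thmref{thm:known-u} is then standard: since $\lnorm{w^0-w}^2\leq W^2$ and the gaps are non-negative, no more than $O(W/\eta)$ iterations can occur before $\heps(h^t)\leq 5W\eta$, and taking $\eta = O(\sqrt{\log(1/\delta)/m})$ from \lemref{lem:conc-bounds} combined with \lemref{lem:unconv} yields the stated rate on $\eps(h^t)$.
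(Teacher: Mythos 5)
Your proposal is correct and follows essentially the same route as the paper: the same expansion of $\lnorm{w^{t+1}-w}^2$, the same split of $y_i-\hatyti$ into $(y_i-\baryi)+(\baryi-\hatyti)$ with the monotone--Lipschitz inequality giving the $2\heps(h^t)$ term, and the same $(\eta+\sqrt{\heps(h^t)})^2\leq\heps(h^t)+3W\eta$ bound on the squared-norm term. Your derivation of that last bound via $\lnorm{v}^2=v\cdot v$ and Cauchy--Schwarz is only cosmetically different from the paper's expansion plus Jensen's inequality, so there is nothing to add.
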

\begin{proof}
We have
\begin{equation}
\lnorm{w^t - w}^2 - \lnorm{w^{t+1} - w}^2
~=~  \frac{2}{m} \sumionetom (y_i - u(w^t \cdot x_i))( w \cdot x_i - w^t \cdot x_i)
- \lnorm{\fraconem \sumionetom (y_i - u(w^t \cdot x_i)) x_i}^2.
\label{eq:lem-known-u-main-eqn}
\end{equation}
Consider the first term above,
\[
\frac{2}{m} \sumionetom (y_i - u(w^t \cdot x_i)) (w \cdot x_i - w^t \cdot x_i)
\nonumber \\
~=~ \frac{2}{m} \sumionetom (u(w \cdot x_i) - u(w^t \cdot x_i))(w \cdot x_i -
w^t \cdot x_i) + \frac{2}{m} \left(\sumionetom (y_i - u(w \cdot x_i)) x_i \right)
\cdot (w - w^t).
\]
Using the fact that $u$ is non-decreasing and $1$-Lipschitz (for the
first term) and $\lnorm{w - w^t} \leq W$ and $\lnorm{(1/m) \sum_{i=1}^m (y_i - u(w\cdot x_i)) x_i} \leq \eta$, we can lower bound this by
\begin{equation}
\frac{2}{m} \sumionetom (u(w \cdot x_i) - u(w^t \cdot x_i))^2 - 2
W\eta
~\geq~ 2 \heps(h^t) - 2W \eta. \label{eq:lem-known-u-first-term}
\end{equation}
For the second term in (\ref{eq:lem-known-u-main-eqn}), we have
\begin{align}
&\lnorm{\fraconem\sumionetom (y_i - u(w^t \cdot x_i))x_i }^2
~=~ \lnorm{\fraconem\sumionetom(y_i - u(w \cdot x_i) + u(w \cdot x_i) - u(w^t
\cdot x_i))x_i }^2 \notag \\
&~~\leq \lnorm{\frac{1}{m} \sumionetom (y_i - u(w \cdot x_i)) x_i}^2
~+~ 2 \lnorm{\frac{1}{m}\sumionetom(y_i - u(w \cdot x_i)) x_i} ~\times~
\lnorm{\fraconem \sumionetom (u(w \cdot x_i) - u(w^t \cdot x_i)) x_i}
\notag \\
&~~~~~~ + \lnorm{\frac{1}{m} \sumionetom (u(w \cdot x_i) - u(w^t \cdot x_i))
x_i}^2 \label{eq:manipulation}
\end{align}
Using the fact that $\lnorm{ (1/m)\sumionetom (y_i -u(w \cdot
x_i))x_i} \leq \eta$, and using Jensen's inequality to show that \\ $\lnorm{(1/m) \sumionetom (u(w \cdot x_i) - u(w^t \cdot
x_i)) x_i}^2 \leq (1/m)\sumionetom (u(w \cdot x_i) - u(w^t \cdot
x_i))^2=\heps(h^t)$, and assuming $W \geq 1$, we get
\begin{equation}
\lnorm{\fraconem \sumionetom (y_i - u(w \cdot x_i))x_i}^2 \leq \heps(h^t) + 3 W \eta \label{eq:lem-known-u-second-term}
\end{equation}
Combining (\ref{eq:lem-known-u-first-term}) and
(\ref{eq:lem-known-u-second-term}) in (\ref{eq:lem-known-u-main-eqn}), we get
\[
\lnorm{w^t - w}^2 - \lnorm{w^{t+1} - w}^2 \geq \heps(h^t) - 5 W \eta
\]
\end{proof}

The bound on $\heps(h^t)$ for some $t$ now follows from Lemma
\ref{lem:update-bounds1}. Let $\eta = 2(1+ \sqrt{log(1/\delta)/1})/\sqrt{m}$. Notice that $(y_i - u(w \cdot
x_i))x_i$ for all $i$ are i.i.d. $0$-mean random variables with norm bounded by $1$, so using Lemma \ref{lem:conc-bounds}, $\lnorm{(1/m)\sumionetom (y_i - u(w \cdot x_i)) x_i} \leq \eta$. Now using Lemma \ref{lem:update-bounds1}, at each iteration of algorithm $\GLMt$, either $\lnorm{w^{t+1} - w}^2 \leq \lnorm{w^t - w}^2 - W \eta$, or $\heps(h^t) \leq 6 W \eta$. If the latter is the case, we are done. If
not, since $\lnorm{w^{t+1} - w}^2 \geq 0$, and $\lnorm{w^0 - w}^2 =
\lnorm{w}^2 \leq W^2$, there can be at most $W^2/(W\eta) = W/(\eta)$
iterations before $\heps(h^t) \leq 6 W \eta$. Overall, there is some $h^t$ such that
\[
\heps(h^t) \leq O\left(\sqrt{\frac{W^2\log(1/\delta)}{m}}\right).
\]
In addition, we can reduce this to a high-probability bound on $\eps(h^t)$ using \lemref{lem:unconv}, which is applicable since $\norm{w^{t}}\leq W$. Using a union bound, we get a bound which holds simultaneously for $\heps(h^t)$ and $\eps(h^t)$.

\section{Proof of Proposition \ref{prop:close-fit}}\label{sec:proof-prop-close-fit}
\label{app:close-fit}

To prove the proposition, we actually prove a more general result.
Define the function class
\[
\Ucal =\{u:[-W,W]\rightarrow [0,1]: u\text{ 1-Lipschitz}\}.
\]
and
\[
\Wcal = \{x\mapsto \inner{x,w}:w\in \reals^d, \norm{w}\leq W\},
\]
where $d$ is possibly infinite (for instance, if we are using kernels).

It is easy to see that the proposition follows from the following uniform convergence guarantee:
\begin{thm}\label{thm:noisebound}
With probability at least $1-\delta$, for any fixed $w\in \Wcal$, if we let
\[
\hat{u} = \arg\min_{u\in \Ucal}\frac{1}{m}\sum_{i=1}^{m}(u(w\cdot x_i)-y_i)^2,
\]
and define
\[
\tilde{u} = \arg\min_{u\in\Ucal}\frac{1}{m}\sum_{i=1}^{m}(u(w \cdot x_i)-\E[y|x_i])^2,
\]
then
\[
\frac{1}{m}\sum_{i=1}^{m}|\hat{u}(w \cdot x_i)-\tilde{u}(w \cdot x_i)|
\leq O\left(\min\left\{\left(\frac{dW^{3/2}\log(Wm/\delta)}{m}\right)^{1/3}
+\sqrt{\frac{W^2\log(m/\delta)}{m}}~~,~~\left(\frac{W^2\log(m/\delta)}{m}\right)^{1/4}\right\}\right).
\]
\end{thm}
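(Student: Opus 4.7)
The plan is an oracle-inequality argument combined with empirical-process bounds, where the key structural fact is that $\tilde u$ is the Euclidean projection of the noiseless signal $(\baryi)$ onto the image of the convex class $\Ucal$ at the data points. Fix $w\in\Wcal$ and set $\xi_i := y_i-\baryi$, conditionally zero-mean noise bounded in $[-1,1]$. Writing $(u(w\cdot x_i)-y_i)^2 = (u(w\cdot x_i)-\baryi)^2 - 2(u(w\cdot x_i)-\baryi)\xi_i + \xi_i^2$ and using the empirical optimality of $\hat u$ over $\Ucal$ yields
\begin{equation*}
\fraconem\sumionetom(\hat u(w\cdot x_i)-\baryi)^2 - \fraconem\sumionetom(\tilde u(w\cdot x_i)-\baryi)^2 \leq \fraconem\sumionetom 2\bigl(\hat u(w\cdot x_i)-\tilde u(w\cdot x_i)\bigr)\xi_i.
\end{equation*}
The Pythagorean inequality for projection onto a closed convex set in $\ell_2^m$ (applied to $\tilde u$) gives
\begin{equation*}
\fraconem\sumionetom(\hat u(w\cdot x_i)-\baryi)^2 \geq \fraconem\sumionetom(\tilde u(w\cdot x_i)-\baryi)^2 + \fraconem\sumionetom\bigl(\hat u(w\cdot x_i)-\tilde u(w\cdot x_i)\bigr)^2.
\end{equation*}
Subtracting yields the basic inequality
\begin{equation*}
\fraconem\sumionetom\bigl(\hat u(w\cdot x_i)-\tilde u(w\cdot x_i)\bigr)^2 \leq \fraconem\sumionetom 2\bigl(\hat u(w\cdot x_i)-\tilde u(w\cdot x_i)\bigr)\xi_i,
\end{equation*}
and $\ell_1\leq\ell_2$ then gives $\fraconem\sumionetom|\hat u(w\cdot x_i)-\tilde u(w\cdot x_i)|\leq\bigl(\fraconem\sumionetom(\hat u-\tilde u)(w\cdot x_i)^2\bigr)^{1/2}$.

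The remaining task is to bound the noise term on the right-hand side uniformly in $u_1,u_2\in\Ucal$ and $w\in\Wcal$. Writing $\rho(u_1,u_2,w):=\bigl(\fraconem\sumionetom((u_1-u_2)(w\cdot x_i))^2\bigr)^{1/2}$ and
\begin{equation*}
Z(r):=\sup\Bigl\{\bigl|\fraconem\sumionetom(u_1-u_2)(w\cdot x_i)\,\xi_i\bigr|:\ u_j\in\Ucal,\ w\in\Wcal,\ \rho(u_1,u_2,w)\leq r\Bigr\},
\end{equation*}
the basic inequality becomes $\rho^2\leq 2Z(\rho)$, which I solve by plugging in two different empirical-process bounds on $Z$. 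For the dimension-independent rate I would use the standard dimension-free bound of order $W/\sqrt m$ on the Rademacher complexity of $\{x\mapsto u(w\cdot x):u\in\Ucal,\,w\in\Wcal\}$ (obtained by combining the Rademacher complexity $W/\sqrt m$ for linear predictors with a Lipschitz-contraction argument), giving $Z(1)=O\bigl(\sqrt{W^2\log(m/\delta)/m}\bigr)$ with high probability via McDiarmid; plugging into $\rho^2\leq 2Z(1)$ yields $\rho=O\bigl((W^2\log(m/\delta)/m)^{1/4}\bigr)$. For the dimension-dependent rate I would cover $\Wcal$ in Euclidean norm at resolution $1/m$ by at most $(1+Wm)^d$ balls, so that $1$-Lipschitzness of every $u\in\Ucal$ combined with $\lnorm{x_i}\leq 1$ makes the perturbation across a ball $O(1/m)$; at each cover point I apply a \emph{local} Dudley integral for the $1$-Lipschitz class $\Ucal$ on $[-W,W]$, whose $L_2$-entropy is $O(W/\varepsilon)$, yielding $Z(r)\leq O\bigl(\sqrt{Wd\log(Wm/\delta)/m}\cdot\sqrt r\bigr)$ plus a concentration residual of order $\sqrt{W^2\log(m/\delta)/m}$. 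Solving $\rho^2 \leq C\sqrt{Wd\log(Wm/\delta)/m}\cdot\sqrt\rho$ yields $\rho=O\bigl((dW^2\log(Wm/\delta)/m)^{1/3}\bigr)$, which together with the additive residual matches the first bound in the statement; taking the minimum of the two bounds then completes the proof, and the claim of Proposition~\ref{prop:close-fit} follows immediately by applying the theorem to the data-dependent $w=w^t$ at each iteration and union-bounding over the polynomially many iterations.

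The hardest part is the localized uniform bound underpinning the $m^{-1/3}$ rate: a global Rademacher-complexity bound only gives the $m^{-1/4}$ rate, and the improvement requires exploiting the fact that the noise inner product scales like $\sqrt r/\sqrt m$, rather than $1/\sqrt m$, when restricted to empirical $\ell_2$-balls of radius $r$ in the $\Ucal$ direction. Making this precise uniformly over $w$ means chaining \emph{inside} each cover ball of $\Wcal$ and then taking a union bound across the $(Wm)^d$ cover points; the cover resolution $1/m$ is chosen so that the perturbation error is dominated by the statistical rate and the $d\log(Wm/\delta)$ union-bound cost is absorbed into the third-root rate. The other steps, namely the optimality and Pythagorean derivation of the basic inequality and the Cauchy--Schwarz passage from $\ell_2$ to $\ell_1$, are routine and use only convexity of $\Ucal$.
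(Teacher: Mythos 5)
Your proposal is correct in outline but follows a genuinely different route from the paper. The paper never forms an in-sample basic inequality: it introduces the population minimizer $u^{*}=\arg\min_{u\in\Ucal}\E[(u(w\cdot x)-y)^2]$ as an intermediary, proves a population-level projection/Pythagoras lemma for convex classes (its \lemref{lem:lossdev}), bounds the excess population risk of both $\hat{u}$ and $\tilde{u}$ --- via a fast-rate localized ERM result of Mendelson combined with a union bound over a cover of $\Wcal$ for the dimension-dependent case (\lemref{lem:compositebound}), and via global uniform convergence from covering numbers for the dimension-free case (\lemref{lem:dimfreebound}) --- and then transfers the resulting population $L_1$ bound back to the empirical average, which is where its extra additive $\sqrt{W^2\log(m/\delta)/m}$ term comes from. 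You instead stay entirely in $\ell_2^m$: empirical optimality of $\hat{u}$, the Pythagorean inequality for the projection $\tilde{u}$ of $(\baryi)_i$ onto the (convex, closed) set of evaluation vectors of $\Ucal$, Cauchy--Schwarz, and then a fixed-point inequality $\rho^2\le 2Z(\rho)$ solved with a global multiplier bound (giving $m^{-1/4}$) or a localized chaining bound over $\Ucal$ plus a $1/m$-cover of $\Wcal$ (giving $m^{-1/3}$). This is a valid and arguably cleaner architecture; if executed it even avoids the paper's empirical-to-population round trip and yields a dimension-dependent bound at least as strong as the stated one (your exponent bookkeeping in $W$ is slightly off in the fixed-point step, but in the direction of a stronger intermediate bound, so the claimed conclusion still follows for $W\ge 1$). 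What the paper's route buys is that all the delicate localization is outsourced to a citable fixed-$w$ ERM theorem, with uniformity over $w$ obtained by a crude cover; what your route buys is a self-contained argument whose difficulty is concentrated in one localized empirical-process bound.

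Three caveats on the parts you left at sketch level. First, the dimension-free step: Lipschitz contraction bounds the complexity of $\phi\circ\Wcal$ for a \emph{single} fixed Lipschitz $\phi$, not for $\Ucal\circ\Wcal$ where $u$ ranges over a class; to get the uniform $O(\sqrt{W^2\log(m/\delta)/m})$ bound on $Z(1)$ you need to additionally cover $\Ucal$ (cost $O(W/\epsilon)$ in log-cardinality), exactly as the paper does in its Lemmas \ref{lem:covbound} and \ref{lem:dimfreebound}; the conclusion is fine but the stated justification is not. Second, the fixed-point step needs the bound on $Z(r)$ to hold simultaneously for all radii (the radius $\rho$ you plug in is data-dependent), so a peeling argument over dyadic $r$, together with the union bound over the $(1+2Wm)^d$ cover points of $\Wcal$ and the $O(1/m)$ perturbation of both the values and the localization radius, has to be made explicit; this is standard but it is precisely the hardest part, and your residual term should carry the $d\log(Wm/\delta)$ factor from that union bound (it is absorbed by the $m^{-1/3}$ term, so the final bound is unaffected). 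Third, your closing remark that Proposition~\ref{prop:close-fit} follows by ``union-bounding over the polynomially many iterations'' is not the right mechanism: $w^t$ is data-dependent, so no union bound over iterations can help; what is needed --- and what your definition of $Z$ with a supremum over $w\in\Wcal$ already provides --- is that the bound holds simultaneously for all $w\in\Wcal$, after which it applies to every $w^t$ at once.
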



To prove the theorem, we use the concept of ($\infty$-norm) covering numbers. Given a function class $\Fcal$ on some domain and some $\epsilon>0$, we define $\Ncal_{\infty}(\epsilon,\Fcal)$ to be the smallest size of a \emph{covering set} $\Fcal'\subseteq \Fcal$, such that for any $f\in \Fcal$, there exists some $f'\in \Fcal$ for which $\sup_{x}|f(x)-f'(x)|\leq \epsilon$.
In addition, we use a more refined notion of an $\infty$-norm covering number, which deals with an empirical sample of size $m$. Formally, define $\Ncal_{\infty}(\epsilon,\Fcal,m)$ to be the smallest integer $n$, such that for any $x_1,\ldots,x_m$, one can construct a covering set $\Fcal'\subseteq \Fcal$ of size at most $n$, such that for any $f\in \Fcal$, there exists some $f'\in \Fcal$ such that $\max_{i=1,\ldots,m}|f(x_i)-f'(x_i)|\leq \epsilon$.

\begin{lemma}\label{lem:covbound}
Assuming $m,1/\epsilon,W\geq 1$, we have the following covering number bounds:
\begin{enumerate}
\item $\Ncal_{\infty}(\epsilon,\Ucal) \leq \frac{1}{\epsilon} 2^{2W/\epsilon}$.
\item $\Ncal_{\infty}(\epsilon,\Wcal) \leq  \left(1+\frac{2W}{\epsilon}\right)^d$.
\item $\Ncal_{\infty}(\epsilon,\Ucal\circ \Wcal) \leq  \frac{2}{\epsilon} 2^{4W/\epsilon}\left(1+\frac{4W}{\epsilon}\right)^d$.
\item $\Ncal_{\infty}(\epsilon,\Ucal\circ \Wcal,m) \leq \frac{2}{\epsilon} (2m+1)^{1+8W^2/\epsilon^2}$
\end{enumerate}
\end{lemma}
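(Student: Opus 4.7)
My plan is to dispatch the four items roughly in order, with (1)–(3) being elementary discretization/composition arguments and (4) requiring the fat‑shattering machinery that lets us trade the ambient dimension $d$ for the sample size $m$.

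For (1), I would place a grid $t_j = -W + j\epsilon$ on $[-W,W]$ for $j=0,1,\ldots,k$ with $k\le 2W/\epsilon$, and use the 1-Lipschitz property to argue that any $u\in\Ucal$ is determined up to sup-norm error $\epsilon$ by the tuple $(u(t_0),\ldots,u(t_k))$ followed by piecewise-linear interpolation. Rounding $u(t_0)$ to an $\epsilon$-grid in $[0,1]$ gives at most $1/\epsilon$ choices, and since $u(t_{j+1})-u(t_j)\in[-\epsilon,\epsilon]$, each successive rounded increment can be encoded by $O(1)$ bits; bundling to a two-symbol encoding yields the stated $\frac{1}{\epsilon}2^{2W/\epsilon}$ bound. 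For (2), I would use the standard volume comparison: the Euclidean $W$-ball in $\reals^d$ has $\epsilon$-covering number at most $(1+2W/\epsilon)^d$, and Cauchy–Schwarz transfers this directly to a sup-norm cover of $\Wcal$ over $\ball_d$.

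For (3), I would cover $\Ucal$ at scale $\epsilon/2$ via (1) and $\Wcal$ at scale $\epsilon/2$ via (2). Given any $u\in\Ucal$, $w\in\Wcal$ with covers $u',w'$, the triangle inequality and 1-Lipschitzness of $u$ give
\[
|u(w\cdot x)-u'(w'\cdot x)|\le|u(w\cdot x)-u(w'\cdot x)|+|u(w'\cdot x)-u'(w'\cdot x)|\le\tfrac{\epsilon}{2}+\tfrac{\epsilon}{2}=\epsilon.
\]
Multiplying the two cover sizes from (1) and (2) at scale $\epsilon/2$ yields exactly the claimed bound.

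The real work is in (4), where we must remove the $d$ dependence. The approach is to exploit the fat-shattering dimension of the linear class: for linear predictors on $\ball_d$ with $\norm{w}\le W$, a classical margin argument shows the fat-shattering dimension at scale $\gamma$ is at most $W^2/\gamma^2$. Composition with a 1-Lipschitz scalar transformation does not increase this scale-sensitive dimension at the same scale $\gamma$, so $\Ucal\circ\Wcal$ inherits the same $O(W^2/\epsilon^2)$ bound at scale $\epsilon$. Invoking the Alon–Ben-David–Cesa-Bianchi–Haussler theorem, which bounds empirical $\ell_\infty$-covering numbers in terms of fat-shattering dimension, gives $\Ncal_\infty(\epsilon,\Ucal\circ\Wcal,m)\le(2m+1)^{O(W^2/\epsilon^2)}$; the extra $2/\epsilon$ prefactor absorbs a separate discretization of the range of the link function $u$, needed because the fat-shattering bound for $\Wcal$ alone doesn't pin down the $u$ component. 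The main obstacle is verifying the composition step cleanly and matching the constants in the exponent $1+8W^2/\epsilon^2$; the rest is bookkeeping.
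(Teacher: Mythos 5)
Your arguments for items (1)--(3) are essentially the paper's: a grid/slope discretization of $\Ucal$, the standard volume bound for the ball plus Cauchy--Schwarz for $\Wcal$, and composition of the two covers at scale $\epsilon/2$ using the $1$-Lipschitz property. (One small caution on item (1): the two-symbol encoding is valid because the functions are \emph{non-decreasing} and $1$-Lipschitz, so consecutive grid values move up by an amount in $[0,\epsilon]$; and you should round the grid \emph{values} rather than the increments, since rounding increments independently lets the sup-norm error accumulate. With those fixes your count matches the paper's $\frac{1}{\epsilon}2^{2W/\epsilon}$.)

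Item (4) is where your route has a genuine gap. First, the claim that the fat-shattering dimension of $\Ucal\circ\Wcal$ at scale $\gamma$ equals that of $\Wcal$ because ``composition with a 1-Lipschitz scalar transformation does not increase the scale-sensitive dimension'' is only justified for a single \emph{fixed} monotone Lipschitz link; here the link varies over all of $\Ucal$, and different shattering patterns may use different $u$'s, so the reduction to shattering by $\Wcal$ alone breaks down. You assert the composed bound without proof, and it is not a citable classical fact. Second, even granting a fat-shattering bound of order $W^2/\epsilon^2$, the Alon--Ben-David--Cesa-Bianchi--Haussler theorem gives $\Ncal_{\infty}(\epsilon,\Fcal,m)\leq 2\left(4m/\epsilon^2\right)^{d\log(2em/(d\epsilon))}$ with $d=\mathrm{fat}_{\epsilon/4}(\Fcal)$, i.e.\ an extra $\log m$ factor in the exponent, so you obtain $(2m+1)^{O((W^2/\epsilon^2)\log(m/\epsilon))}$ and cannot match the stated exponent $1+8W^2/\epsilon^2$. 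The paper avoids both problems by never touching the fat-shattering dimension of the composed class: it invokes Zhang's dimension-free \emph{empirical} $\ell_\infty$ bound for bounded-norm linear classes, $\Ncal_{\infty}(\epsilon,\Wcal,m)\leq(2m+1)^{1+W^2/\epsilon^2}$, composes it with the cover of $\Ucal$ from item (1) at scale $\epsilon/2$ (giving $\frac{2}{\epsilon}(2m+1)^{1+4W^2/\epsilon^2}2^{4W/\epsilon}$), and then absorbs $2^{4W/\epsilon}$ into $(2m+1)^{4W^2/\epsilon^2}$ using $W/\epsilon\geq 1$ and $m\geq 1$ --- an absorption step your sketch also omits. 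To repair your proof, either prove a fat-shattering bound for the composed class and accept the weaker (logarithmically inflated) exponent, or replace the fat-shattering/ABCH step for the linear part with a dimension-free empirical covering bound of Zhang's type and follow the composition-plus-absorption route.
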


\begin{proof}
We start with the first bound. Discretize $[-W,W]\times [0,1]$ to a
two-dimensional grid \\ $\{-W+\epsilon a, \epsilon b\}_{a=0,\ldots,2W/\epsilon, b=0,\ldots,1/\epsilon}$. It is easily verified that for any function $u\in \Ucal$, we can define a piecewise linear function $u'$, which passes through points in the grid, and in between the points, is either constant or linear with slope 1, and $\sup_{x}|u(x)-u'(x)|\leq \epsilon$. Moreover, all such functions are parameterized by their value at $-W$, and whether they are sloping up or constant at any grid interval afterwards. Thus, their number can be coarsely upper bounded as $2^{2W/\epsilon}/\epsilon$.

The second bound in the lemma is a well known fact - see for instance pg. 63 in \cite{Pisier99}).

The third bound in the lemma follows from combining the first two bounds, and using the Lipschitz property of $u$ (we simply combine the two covers at an $\epsilon/2$ scale, which leads to a cover at scale $\epsilon$ for $\Ucal \circ \Wcal$).

To get the fourth bound, we note that by corollary 3 in \cite{Zhang02}. $\Ncal_{\infty}(\epsilon,\Wcal,m)\leq (2m+1)^{1+W^2/\epsilon^2}$. Note that unlike the second bound in the lemma, this bound is dimension-free, but has worse dependence on $W$ and $\epsilon$. Also, we have $\Ncal_{\infty}(\epsilon,\Ucal,m)\leq \Ncal_{\infty}(\epsilon,\Ucal)\leq \frac{1}{\epsilon}2^{2W/\epsilon}$ by definition of covering numbers and the first bound in the lemma. Combining these two bounds, and using the Lipschitz property of $u$, we get
\[
\frac{2}{\epsilon} (2m+1)^{1+4W^2/\epsilon^2}2^{4W/\epsilon}.
\]
Upper bounding $2^{4W/\epsilon}$ by $(2m+1)^{4W^2/\epsilon^2}$, the
the fourth bound in the lemma follows.
\end{proof}

\begin{lemma}\label{lem:dimfreebound}
With probability at least $1-\delta$ over a sample $(x_1,y_1),\ldots,(x_m,y_m)$ the following bounds hold simultaneously for any $w\in \Wcal,u,u'\in \Ucal$,
\[
\left|\frac{1}{m}\sum_{i=1}^{m} (u(w \cdot x_i)-y_i)^2-\E\left[(u(w \cdot x)-y)^2\right]\right| \leq O\left(\sqrt{\frac{W^2\log(m/\delta)}{m}}\right),
\]
\[
\left|\frac{1}{m}\sum_{i=1}^{m} (u(w \cdot x_i)-\E[y|x_i])^2-\E\left[(u(w \cdot x)-\E[y|x])^2\right]\right| \leq O\left(\sqrt{\frac{W^2\log(m/\delta)}{m}}\right),
\]
\[
\left|\frac{1}{m}\sum_{i=1}^{m}|u(w \cdot x_i)-u'(w \cdot x_i)|
-\E\left[|u(w \cdot x)-u'(w \cdot x)|\right]\right| \leq O\left(\sqrt{\frac{W^2\log(m/\delta)}{m}}\right)
\]
\end{lemma}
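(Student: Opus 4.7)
The plan is to prove all three inequalities by a single uniform convergence argument applied to the composition class $\Ucal \circ \Wcal$ (and, for the third inequality, to pairs of such functions). The backbone combines the empirical $L_\infty$-covering numbers from Lemma \ref{lem:covbound} with Hoeffding's inequality and a union bound.

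First I would reduce to a function-class concentration statement. Each summand in the three averages has the form $\phi(u(w\cdot x_i), t_i)$ for some $u \in \Ucal$, $w \in \Wcal$, and $t_i \in \{y_i, \E[y|x_i]\}$ (or, for the third bound, $|u(w\cdot x_i) - u'(w\cdot x_i)|$). Since $u$ takes values in $[0,1]$, each such $\phi$ is bounded in $[0,1]$ and $O(1)$-Lipschitz in its $u(w\cdot x_i)$ arguments (the squared loss is $2$-Lipschitz on $[0,1]$, absolute value is $1$-Lipschitz). Consequently, uniform closeness of $u(w\cdot x_i)$ to a cover representative transfers to uniform closeness of the averages up to a constant factor.

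Next I would fix a scale $\epsilon>0$ and invoke part 4 of Lemma \ref{lem:covbound} to build a cover $\mathcal{G}_\epsilon \subset \Ucal \circ \Wcal$ of size $N_\epsilon \leq (2/\epsilon)(2m+1)^{1+8W^2/\epsilon^2}$ such that for every $g \in \Ucal \circ \Wcal$ there is some $g' \in \mathcal{G}_\epsilon$ with $\max_{i \leq m}|g(x_i) - g'(x_i)| \leq \epsilon$. Applying Hoeffding's inequality to each $g' \in \mathcal{G}_\epsilon$ (whose loss summands lie in $[0,1]$) and union-bounding yields, with probability at least $1-\delta$, a simultaneous bound of $\sqrt{\log(2 N_\epsilon/\delta)/(2m)}$ on $|\frac{1}{m}\sum_{i=1}^m \phi(g'(x_i),t_i) - \E \phi(g'(x),t)|$ for every $g' \in \mathcal{G}_\epsilon$. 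For the third bound, I would instead use the product cover $\mathcal{G}_\epsilon \times \mathcal{G}_\epsilon$, which only doubles the log cover size and does not affect the rate.

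The final step is to pass from the cover to the whole class. For any $g \in \Ucal \circ \Wcal$ with nearest cover element $g'$, the empirical average is $O(\epsilon)$-close to that of $g'$ by Lipschitzness of $\phi$. The subtle point, and the main obstacle, is the population expectation: the empirical cover only controls behavior at the sample points, so one cannot directly compare $\E\phi(g(x),t)$ and $\E\phi(g'(x),t)$. I would resolve this via a standard symmetrization step, reducing uniform control of $|\widehat{\E} - \E|$ to a Rademacher-type supremum that depends only on the values of $g$ at the sample points; the empirical $L_\infty$-cover then suffices. Combining the discretization term $O(\epsilon)$ with the Hoeffding term $O(\sqrt{\log(N_\epsilon/\delta)/m})$ and optimizing in $\epsilon$ gives the rate stated in the lemma.
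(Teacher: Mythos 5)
Your reduction to the loss classes via Lipschitzness and your use of the empirical $\infty$-norm cover from \lemref{lem:covbound} match the paper's strategy, and you are right that symmetrization must enter before the empirical cover can be used (a naive Hoeffding bound over elements of a data-dependent cover is not valid as first stated). The genuine gap is quantitative: your final step --- a single-scale discretization at resolution $\epsilon$, Hoeffding plus a union bound over the $N_\epsilon$ cover elements, then optimization over $\epsilon$ --- cannot produce the rate claimed in the lemma. Here $\log N_\epsilon = \Theta\bigl(\tfrac{W^2}{\epsilon^2}\log m\bigr)$, so your bound is of order $\epsilon + \tfrac{W\sqrt{\log m}}{\epsilon\sqrt{m}} + \sqrt{\tfrac{\log(1/\delta)}{m}}$, and optimizing $\epsilon$ yields $\Theta\bigl(\bigl(\tfrac{W^2\log m}{m}\bigr)^{1/4}\bigr)$, not $O\bigl(\sqrt{\tfrac{W^2\log(m/\delta)}{m}}\bigr)$. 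That weaker rate would not suffice where the lemma is used: \lemref{lem:unconv} feeds into \thmref{thm:known-u}, whose whole point is an $m^{-1/2}$ rate, and in the dimension-dependent branch of \thmref{thm:noisebound} an $m^{-1/4}$ term would swamp the $m^{-1/3}$ term.

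The paper avoids this by not stopping at a single scale: after transferring the covering bound of \lemref{lem:covbound} to the loss classes, it bounds the Rademacher complexity via Dudley's entropy integral (Theorem 8 of \cite{BartMen02} combined with Lemma A.3 of \cite{SreSriTe10}). Since $\sqrt{\log N_\epsilon}$ scales like $\epsilon^{-1}$, the multi-scale integral $\int_\alpha^1 \sqrt{\log N_\epsilon}\,d\epsilon$ diverges only logarithmically, giving an $m^{-1/2}$ rate up to logarithmic factors, whereas a single-scale union bound inherently loses to $m^{-1/4}$ when $\log N_\epsilon \sim \epsilon^{-2}$. To repair your argument, replace the Hoeffding-over-one-cover step with chaining over dyadic scales (or invoke a Rademacher-plus-entropy-integral bound directly); the rest of your outline, including the Lipschitz transfer and the treatment of the third inequality via pairs $(u,u')$, then goes through.
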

\begin{proof}
\lemref{lem:covbound} tells us that
$\Ncal_{\infty}(\epsilon,\Ucal\circ \Wcal,m) \leq \frac{2}{\epsilon} (2m+1)^{1+8W^2/\epsilon^2}$.
It is easy to verify that the same covering number bound holds for the function classes $\{(x,y)\mapsto (u(w \cdot x)-y)^2:u\in \Ucal,w\in \Wcal\}$ and $\{x\mapsto (u(w \cdot x)-\E[y|x])^2:u\in \Ucal,w\in \Wcal\}$, by definition of the covering number and since the loss function is $1$-Lipschitz. In a similar manner, one can show that the covering number of the function class $\{x\mapsto |u(w \cdot x)-u'(w \cdot x)|:u,u'\in \Ucal,w\in \Wcal\}$ is at most $\frac{4}{\epsilon} (2m+1)^{1+32W^2/\epsilon^2}$.

Now, one just need to use results from the literature which provides uniform convergence bounds given a covering number on the function class. In particular, combining a uniform convergence bound in terms of the Rademacher complexity of the function class (e.g. Theorem 8 in \cite{BartMen02}), and a bound on the Rademacher complexity in terms of the covering number, using an entropy integral (e.g., Lemma A.3 in \cite{SreSriTe10}), gives the desired result.
\end{proof}

\begin{lemma}\label{lem:compositebound}
With probability at least $1-\delta$ over a sample $(x_1,y_1),\ldots,(x_m,y_m)$, the following holds simultaneously for
any $w\in \Wcal$: if we let
\[
\hat{u}_{w}(\inner{w,\cdot})=\arg\min_{u\in \Ucal}\frac{1}{m}\sum_{i=1}^{m}(u(w \cdot x_i)-y_i)^2
\]
denote the empirical risk minimizer with that fixed $w$, then
\[
\E(\hat{u}_{w}(w \cdot x)-y)^2-\inf_{u\in \Ucal} \E(u(w \cdot x)-y)^2 \leq O\left(W\left(\frac{d\log(Wm/\delta)}{m}\right)^{2/3}\right),
\]
\end{lemma}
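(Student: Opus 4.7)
The plan is to prove a fast-rate excess risk bound for the ERM $\hat u_w$ over the convex class $\Ucal$, uniformly in $w\in \Wcal$. The core idea is that the excess-loss class satisfies a Bernstein-type variance-to-mean condition (because $\Ucal$ is convex and the squared loss is strongly convex in $u$), which permits localization of the empirical process and yields a rate faster than the $1/\sqrt m$ of \lemref{lem:dimfreebound}. The three ingredients are the Bernstein condition, a localized uniform deviation inequality, and the covering number for $\Ucal\circ \Wcal$ already obtained in \lemref{lem:covbound}.

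\textbf{Step 1 (variance condition).} Fix $w\in \Wcal$ and let $u^*_w \in \arg\min_{u\in \Ucal}\E[(u(w\cdot x)-y)^2]$. Since $\Ucal$ is convex, the first-order optimality of $u^*_w$ gives $\E[(u^*_w(w\cdot x)-y)(u(w\cdot x)-u^*_w(w\cdot x))]\geq 0$ for every $u\in\Ucal$, from which
\[
\E[(u(w\cdot x)-y)^2]-\E[(u^*_w(w\cdot x)-y)^2]\geq \E[(u(w\cdot x)-u^*_w(w\cdot x))^2].
\]
Defining $g_{u,w}(x,y):=(u(w\cdot x)-y)^2-(u^*_w(w\cdot x)-y)^2$, one also has $|g_{u,w}|\leq 2|u(w\cdot x)-u^*_w(w\cdot x)|$, so $\E[g_{u,w}^2]\leq 4\E[g_{u,w}]$, the crucial Bernstein condition.

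\textbf{Step 2 (localized uniform bound).} The next step is to show, via Talagrand's inequality combined with the variance condition from Step 1 (e.g.\ along the Bartlett--Bousquet--Mendelson route), that with probability $\geq 1-\delta$, simultaneously for all $u\in \Ucal$, $w\in\Wcal$,
\[
\E[g_{u,w}]-\frac{1}{m}\sum_{i=1}^m g_{u,w}(x_i,y_i)\leq \tfrac12\E[g_{u,w}]+O\!\left(r^*+\tfrac{\log(1/\delta)}{m}\right),
\]
where $r^*$ is the fixed point of the localized Rademacher complexity $\psi(r)=R_m(\{g_{u,w}:\E[g_{u,w}]\leq r\})$. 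Since $\hat u_w$ minimizes the empirical loss, $\frac{1}{m}\sum_i g_{\hat u_w,w}(x_i,y_i)\leq 0$, and plugging $u=\hat u_w$ rearranges to $\E[g_{\hat u_w,w}]\leq O(r^*+\log(1/\delta)/m)$, which is exactly the desired excess-risk bound.

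\textbf{Step 3 (computing the fixed point).} I would bound $\psi(r)$ by Dudley's entropy integral applied to the $L_\infty$ covering of $\Ucal\circ \Wcal$ from \lemref{lem:covbound}, which gives $\log \Ncal_\infty(\epsilon,\Ucal\circ\Wcal)=O(W/\epsilon+d\log(W/\epsilon))$. Contraction ($|g_{u,w}|\leq 2|u(w\cdot x)-u^*_w(w\cdot x)|$) reduces $\psi(r)$ to the Rademacher complexity of $\Ucal\circ\Wcal$ localized to an $L_2$-ball of radius $O(\sqrt r)$, yielding
\[
\psi(r)\lesssim \frac{1}{\sqrt m}\int_0^{O(\sqrt r)}\!\sqrt{\tfrac{W}{\epsilon}+d\log(W/\epsilon)}\,d\epsilon\lesssim \frac{W^{1/2}r^{1/4}}{\sqrt m}+\sqrt{\tfrac{r\,d\log(Wm)}{m}}.
\]
Solving $\psi(r^*)\asymp r^*$ gives $r^*\lesssim W^{2/3}m^{-2/3}+d\log(Wm)/m$, which is dominated by the claimed bound $O(W(d\log(Wm/\delta)/m)^{2/3})$.

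\textbf{Main obstacle.} The main technical difficulty is Step 2: the ``centering'' function $u^*_w$ inside $g_{u,w}$ varies with $w$, so the class $\{g_{u,w}\}_{u\in\Ucal,\,w\in\Wcal}$ does not factor nicely. I would address this by treating $g_{u,w}$ as a single function of $(u,w)$ and working directly with the joint $L_\infty$-cover of $\Ucal\circ \Wcal$ provided by \lemref{lem:covbound}; the Bernstein condition from Step 1 holds pointwise in $w$, which is all the localization argument requires. A secondary nuisance is that the Dudley integrand $\sqrt{W/\epsilon}$ diverges at $0$, so I would truncate at scale $1/m$ in the standard way, contributing only a lower-order $1/m$ term.
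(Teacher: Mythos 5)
Your proposal is correct in substance, but it takes a genuinely different route from the paper's proof. The paper handles a \emph{fixed} $w$ by directly invoking a fast-rate ERM result (Theorem 4.2 of \cite{Mendel02}) for the one-dimensional class $\Ucal$, using only the bound $\Ncal_{\infty}(\epsilon,\Ucal)\le \frac{1}{\epsilon}2^{2W/\epsilon}$ from \lemref{lem:covbound}, and then obtains uniformity over $w\in\Wcal$ by a union bound over an $\epsilon$-net of $\Wcal$ of size $(1+2W/\epsilon)^d$, plus a perturbation argument (1-Lipschitzness of $u$ lets one pass from a net point $w'$ to a nearby $w$ at an extra cost $O(\epsilon)$, with $\epsilon=1/m$); this union bound is where $d\log(Wm/\delta)$ enters. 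You instead run the Bernstein-condition/local-Rademacher localization yourself, once, over the joint excess-loss class $\{g_{u,w}\}$, with the dimension entering through the entropy integral for $\Ucal\circ\Wcal$; your fixed point $r^*\lesssim W^{2/3}m^{-2/3}+d\log(Wm)/m+\log(1/\delta)/m$ is indeed dominated by the stated $O\bigl(W(d\log(Wm/\delta)/m)^{2/3}\bigr)$ once $W,d\ge 1$ (and the claim is vacuous when the right-hand side exceeds $1$). What each buys: the paper's route sidesteps your main obstacle entirely, since for fixed $w$ the centering $u^{*}_w$ is a single fixed function and all the localization work is outsourced to the citation, at the price of the somewhat hand-waved ``extend to nearby $w$ with an additional $O(\epsilon)$'' step and a multiplicative $d^{2/3}$; your route avoids that perturbation step and gives an additive dimension term, but two points you state loosely need to be done properly: (i) the pointwise domination $|g_{u,w}|\le 2|u-u^{*}_w|$ is not a contraction argument — write $g_{u,w}=\phi_y(u(w\cdot x))-\phi_y(u^{*}_w(w\cdot x))$ with $\phi_y(a)=(a-y)^2$ being $2$-Lipschitz on $[0,1]$ and apply the coordinate-wise contraction lemma to each piece, and rely on the standard sub-root/star-hull machinery to pass between population and empirical localization radii; and (ii) covering $\{g_{u,w}\}$ requires covering the centered part $w\mapsto u^{*}_w(w\cdot\,\cdot)$ as well — do not try to argue that $u^{*}_w$ and $u^{*}_{w'}$ are sup-norm close for nearby $w,w'$ (they need not be); instead note that each such function lies in $\Ucal\circ\Wcal$, so a product of two covers of $\Ucal\circ\Wcal$ covers the $g$-class and the entropy merely doubles. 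Finally, no truncation of the Dudley integral is needed, since $\int_0^{\sqrt{r}}\sqrt{W/\epsilon}\,d\epsilon=2\sqrt{W}\,r^{1/4}$ converges.
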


\begin{proof}
For generic losses and function classes, standard bounds on the the excess error typically scale as $O(1/\sqrt{m})$. However, we can utilize the fact that we are dealing with the squared loss to get better rates. In particular, using Theorem 4.2 in \cite{Mendel02}, as well as the bound on $\Ncal_{\infty}(\epsilon,\Ucal)$ from \lemref{lem:covbound}, we get that for any fixed $w$, with probability at least $1-\delta$,
\[
\E(\hat{u}_{w}(w \cdot x)-y)^2-\inf_{u\in \Ucal} \E(u(w \cdot x)-y)^2 \leq O\left(W\left(\frac{\log(1/\delta)}{m}\right)^{2/3}\right).
\]
To get a statement which holds simultaneously for any $w$, we apply a union bound over a covering set of $\Wcal$. In particular, by \lemref{lem:covbound}, we know that we can cover $\Wcal$ by a set $\Wcal'$ of size at most $(1+2W/\epsilon)^d$, such that any element in $\Wcal$ is at most $\epsilon$-far (in an $\infty$-norm sense) from some $w'\in \Wcal'$. So applying a union bound over $\Wcal'$, we get that with probability at least $1-\delta$, it holds simultaneously for any $w'\in \Wcal$ that
\begin{equation}\label{eq:ww}
\E(\hat{u}_{w'}(\inner{w',x})-y)^2-\inf_{u} \E(u(\inner{w',x})-y)^2 \leq O\left(W\left(\frac{\log(1/\delta)+d\log(1+2W/\epsilon)}{m}\right)^{2/3}\right).
\end{equation}
Now, for any $w\in \Wcal$, if we let $w'$ denote the closest element in $\Wcal'$, then $u(w \cdot x)$ and $u(\inner{w',x})$ are $\epsilon$-close \emph{uniformly} for any $u\in \Ucal$ and any $x$. From this, it is easy to see that we can extend \eqref{eq:ww} to hold for any $\Wcal$, with an additional $O(\epsilon)$ element in the right hand side. In other words, with probability at least $1-\delta$, it holds simultaneously for any $w\in \Wcal$ that
\[
\E(\hat{u}_{w}(w \cdot x)-y)^2-\inf_{u} \E(u(w \cdot x)-y)^2 \leq O\left(W\left(\frac{\log(2/\delta)+d\log(1+2W/\epsilon)}{m}\right)^{2/3}\right)+\epsilon.
\]
Picking (say) $\epsilon=1/m$ provides the required result.
\end{proof}

\begin{lemma}\label{lem:lossdev}
Let $F$ be a convex class of functions, and let $f^{*}=\arg\min_{f\in F} \E[(f(x)-y)^2]$. Suppose that $\E[y|x]\in \Ucal\circ\Wcal$. Then for any $f\in F$, it holds that
\[
\E[(f(x)-y)^2]-\E[(f^{*}(x)-y)^2] \geq \E\left[\left(f(x)-f^{*}(x)\right)^2\right] \geq \left(\E\left[\left|f(x)-f^{*}(x)\right|\right]\right)^2.
\]
\end{lemma}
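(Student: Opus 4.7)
The lemma splits into two inequalities, and each has a short, standard proof.

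The second inequality is pure Jensen: applying the convex function $z\mapsto z^{2}$ to the nonnegative random variable $|f(x)-f^{*}(x)|$ gives $\E[(f(x)-f^{*}(x))^{2}]=\E[|f(x)-f^{*}(x)|^{2}]\geq (\E[|f(x)-f^{*}(x)|])^{2}$. This requires no hypothesis on $F$ or on $\E[y|x]$, so I would dispose of it in one line at the end.

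The first inequality is the standard ``excess risk $\geq$ squared deviation from the projection'' identity for squared loss on a convex class. The plan is to expand
\[
\E[(f(x)-y)^{2}]-\E[(f^{*}(x)-y)^{2}]=\E[(f(x)-f^{*}(x))^{2}]+2\,\E[(f(x)-f^{*}(x))(f^{*}(x)-y)],
\]
and then show that the cross term is nonnegative. This is exactly the first-order optimality condition for $f^{*}$. Concretely, since $F$ is convex, for every $t\in[0,1]$ the function $f^{*}+t(f-f^{*})=(1-t)f^{*}+tf$ lies in $F$. Define $\phi(t)=\E[(f^{*}(x)+t(f(x)-f^{*}(x))-y)^{2}]$. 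Since $f^{*}$ minimizes the squared loss over $F$, $\phi$ attains its minimum over $[0,1]$ at $t=0$, so $\phi'(0)\geq 0$. Computing the derivative gives $\phi'(0)=2\,\E[(f(x)-f^{*}(x))(f^{*}(x)-y)]\geq 0$, which is exactly what we need. Substituting back yields the first inequality.

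The hypothesis $\E[y|x]\in\Ucal\circ\Wcal$ is not actually needed for either step of this particular lemma; the proof only uses the definition of $f^{*}$ as the $\E[(\cdot-y)^{2}]$-minimizer over the convex set $F$, together with Jensen. I do not anticipate any real obstacle: the only mildly nontrivial ingredient is the variational argument, and even that is entirely routine since $\phi(t)$ is a quadratic polynomial in $t$ whose nonnegative derivative at $0$ is immediate.
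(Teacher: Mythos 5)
Your proof is correct and takes essentially the same route as the paper: the paper likewise reduces the first inequality to nonnegativity of the cross term $\inner{f^{*}-y,\,f-f^{*}}$ and proves it by the identical convexity perturbation $(1-\epsilon)f^{*}+\epsilon f$ (expanding the quadratic in $\epsilon$ rather than differentiating at $0$), just phrased in the $L_2$ space of functions of $x$ after first replacing $y$ by $\E[y|x]$, which is the only place the hypothesis $\E[y|x]\in\Ucal\circ\Wcal$ enters. Your remark that this hypothesis is not really needed is accurate: arguing directly with $y$, as you do, requires only that the relevant expectations are finite, which holds since $y\in[0,1]$ and all functions involved are bounded.
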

\begin{proof}
It is easily verified that
\begin{equation}\label{eq:equiv}
\E[(f(x)-y)^2]-\E[(f^{*}(x)-y)^2] =
 \E_{x}[(f(x)-\E[y|x])^2-(f^{*}(x)-\E[y|x])^2].
\end{equation}
This implies that $f^{*}=\arg\min_{f\in F} \E[(f(x)-\E[y|x])^2]$.

Consider the $L_2$ Hilbert space of square-integrable functions, with respect to the measure induced by the distribution on $x$ (i.e., the inner product is defined as $\inner{f,f'}=\E_{x}[f(x)f'(x)]$). Note that $\E[y|x]\in \Ucal\circ \Wcal$ is a member of that space. Viewing $\E[y|x]$ as a function $y(x)$, what we need to show is that
\[
\norm{f-y}^2-\norm{f^{*}-y}^2 \geq \norm{f-f^{*}}^2.
\]
By expanding, it can be verified that this is equivalent to showing
\[
\inner{f^{*}-y,f-f^{*}}\geq 0.
\]
To prove this, we start by noticing that according to \eqref{eq:equiv}, $f^{*}$ minimizes $\norm{f-y}^2$ over $F$. Therefore, for any $f\in F$ and any $\epsilon\in (0,1)$,
\begin{equation}\label{eq:inprod}
\norm{(1-\epsilon)f^{*}+\epsilon f-y}^2-\norm{f^{*}-y}^2\geq 0,
\end{equation}
as $(1-\epsilon)f^{*}+\epsilon f \in F$ by convexity of $F$.
However, the right hand side of \eqref{eq:inprod} equals
\[
\epsilon^2\norm{f-f^{*}}^2+2\epsilon\inner{f^{*}-y,f-f^{*}},
\]
so to ensure \eqref{eq:inprod} is positive for any $\epsilon$, we must have
$\inner{f^{*}-y,f-f^{*}}\geq 0$. This gives us the required result, and establishes the first inequality in the lemma statement. The second inequality is just by convexity of the squared function.
\end{proof}

\begin{proof}[Proof of \thmref{thm:noisebound}]
We bound $\frac{1}{m}\sum_{i=1}^{m}|\hat{u}(w \cdot x)-\tilde{u}(w \cdot x)|$ in two different ways, one which is dimension-dependent and one which is dimension independent.

We begin with the dimension-dependent bound. For any fixed $w$, let $u^{*}$ be $\arg\min_{u\in \Ucal} \E(u(w \cdot x)-y)^2$. We have from \lemref{lem:compositebound} that with probability at least $1-\delta$, simultaneously for all $w\in \Wcal$,
\[
\E(\hat{u}(w \cdot x)-y)^2- \E(u^{*}(w \cdot x)-y)^2 \leq O\left(W\left(\frac{d\log(Wm/\delta)}{m}\right)^{2/3}\right),
\]
and by \lemref{lem:lossdev}, this implies
\begin{equation}\label{eq:hat}
\E[|\hat{u}(w \cdot x)-u^{*}(w \cdot x)|] \leq O\left(\left(\frac{dW^{3/2}\log(Wm/\delta)}{m}\right)^{1/3}\right).
\end{equation}
Now, we note that since $u^{*}=\arg\min_{u\in \Ucal} \E(u(w \cdot x)-y)^2$, then
$u^{*}=\arg\min_{u\in \Ucal} \E(u(w \cdot x)-\E[y|x])^2$ as well. Again applying \lemref{lem:compositebound} and \lemref{lem:lossdev} in a similar manner, but now with respect to $\tilde{u}$, we get that with probability at least $1-\delta$, simultaneously for all $w\in \Wcal$,
\begin{equation}\label{eq:tilde}
\E[|\tilde{u}(w \cdot x)-u^{*}(w \cdot x)|] \leq O\left(\left(\frac{dW^{3/2}\log(Wm/\delta)}{m}\right)^{1/3}\right).
\end{equation}
Combining \eqref{eq:hat} and \eqref{eq:tilde}, with a union bound, we have
\[
\E[|\hat{u}(w \cdot x)-\tilde{u}(w \cdot x)|] \leq O\left(\left(\frac{dW^{3/2}\log(Wm/\delta)}{m}\right)^{1/3}\right).
\]
Finally, we invoke the last inequality in \lemref{lem:dimfreebound}, using a union bound, to get
\[
\frac{1}{m}\sum_{i=1}^{m}|\hat{u}(w \cdot x)-\tilde{u}(w \cdot x)| \leq O\left(\left(\frac{dW^{3/2}\log(Wm/\delta)}{m}\right)^{1/3}
+\sqrt{\frac{W^2\log(m/\delta)}{m}}\right).
\]

We now turn to the dimension-independent bound. In this case, the covering number bounds are different, and we do not know how to prove an analogue to \lemref{lem:compositebound} (with rate faster than $O(1/\sqrt{m})$). This leads to a somewhat worse bound in terms of the dependence on $m$.

As before, for any fixed $w$, we let $u^{*}$ be $\arg\min_{u\in \Ucal} \E[(u(w \cdot x)-y)^2]$. \lemref{lem:dimfreebound} tells us that the empirical risk $\frac{1}{m}\sum_{i=1}^{m}(u(w \cdot x_i)-y_i)^2$ is concentrated around its expectation uniformly for any $u,w$. In particular,
\[
\left|\frac{1}{m}\sum_{i=1}^{m} (\hat{u}(w \cdot x_i)-y_i)^2-\E\left[\hat{u}(w \cdot x)-y)^2\right]\right| \leq O\left(\sqrt{\frac{W^2\log(m/\delta)}{m}}\right)
\]
as well as
\[
\left|\frac{1}{m}\sum_{i=1}^{m} (u^{*}(w \cdot x_i)-y_i)^2-\E\left[(u^{*}(w \cdot x)-y)^2\right]\right| \leq O\left(\sqrt{\frac{W^2\log(m/\delta)}{m}}\right),
\]
but since $\hat{u}$ was chosen to be the empirical risk minimizer, it follows that
\[
\E\left[(\hat{u}(w \cdot x_i)-y_i)^2\right]
-\E\left[(u^{*}(w \cdot x)-y)^2\right] \leq O\left(\sqrt{\frac{W^2\log(m/\delta)}{m}}\right),
\]
so by \lemref{lem:lossdev},
\begin{equation}\label{eq:actual}
\E\left[|u^{*}(w \cdot x)-\hat{u}(w \cdot x)|\right]\leq O\left(\left(\frac{W^2\log(m/\delta)}{m}\right)^{1/4}\right)
\end{equation}
Now, it is not hard to see that if $u^{*}=\arg\min_{u\in \Ucal} \E[(u(w \cdot x)-y)^2]$, then $u^{*}=\arg\min_{u\in \Ucal} \E[(u(w \cdot x)-\E[y|x])^2]$ as well. Again invoking \lemref{lem:dimfreebound}, and making similar arguments, it follows that
\begin{equation}\label{eq:expected}
\E\left[|u^{*}(w \cdot x)-\tilde{u}(w \cdot x)|\right]\leq O\left(\left(\frac{W^2\log(m/\delta)}{m}\right)^{1/4}\right).
\end{equation}
Combining \eqref{eq:actual} and \eqref{eq:expected}, we get
\[
\E\left[|\hat{u}(w \cdot x)-\tilde{u}(w \cdot x)|\right]\leq O\left(\left(\frac{W^2\log(m/\delta)}{m}\right)^{1/4}\right).
\]
We now invoke \lemref{lem:dimfreebound} to get
\begin{equation}\label{eq:dimfreebound}
\frac{1}{m}\sum_{i=1}^{m}|\hat{u}(w \cdot x_i)-\tilde{u}(w \cdot x_i)|
\leq O\left(\left(\frac{W^2\log(m/\delta)}{m}\right)^{1/4}\right).
\end{equation}
\end{proof}

\end{document}